\documentclass[a4paper,10pt]{article}
\usepackage[utf8]{inputenc}
\usepackage[T1]{fontenc}
\usepackage{lmodern}

\usepackage{hyperref}
\usepackage{url}
\usepackage{authblk}
\usepackage{natbib}

\usepackage[left=2cm,%
                right=2cm,%
                top=2.25cm,%
                bottom=2.25cm,%
                headheight=12pt,%
                a4paper]{geometry}%

\usepackage{booktabs}       % professional-quality tables
\usepackage{amsfonts}       % blackboard math symbols
\usepackage{nicefrac}       % compact symbols for 1/2, etc.
\usepackage{microtype}      % microtypography
\usepackage{xcolor}         % colors
\usepackage{caption,subcaption} % for subfigures
\usepackage{graphicx}       
\graphicspath{ {figures/} }

\usepackage{wrapfig} 

\usepackage{xspace}
\makeatletter
\DeclareRobustCommand\onedot{\futurelet\@let@token\@onedot}
\def\@onedot{\ifx\@let@token.\else.\null\fi\xspace}

\def\iid{{i.i.d}\onedot}
\def\eg{{e.g}\onedot} 
\def\ie{{i.e}\onedot}

\makeatother

\usepackage{amsfonts,amsmath,amssymb,amsthm}       % blackboard math symbols
\newtheorem{theorem}{Theorem}

\newtheorem{lemma}[theorem]{Lemma}
\theoremstyle{definition}
\newtheorem{definition}{Definition}

\newtheoremstyle{assumption}% name of the style to be used
  {\topsep}% measure of space to leave above the theorem. E.g.: 3pt
  {\topsep}% measure of space to leave below the theorem. E.g.: 3pt
  {\itshape}% name of font to use in the body of the theorem
  {0pt}% measure of space to indent
  {\itshape}% name of head font
  {. ---}% punctuation between head and body
  { }% space after theorem head; " " = normal interword space
  {\thmname{#1}\thmnumber{ #2}\textnormal{\thmnote{ (#3)}}}

\theoremstyle{assumption}
\newtheorem{assumption}{Assumption}

\DeclareMathOperator*{\argmin}{\operatorname{argmin}}

\newcommand{\Ccal}{\mathcal{C}}
\newcommand{\E}{\mathbb{E}}

\newcommand{\Gcal}{\mathcal{G}}
\newcommand{\Hcal}{\mathcal{H}}
\newcommand{\Lcal}{\mathcal{L}}
\newcommand{\hatLcal}{\widehat{\Lcal}}

\newcommand{\Pcal}{\mathcal{P}}

\newcommand{\R}{\mathbb{R}}
\newcommand{\Rfrak}{\mathfrak{R}}

\newcommand{\Ucal}{\mathcal{U}} % scalarization
\newcommand{\Ufrak}{\mathfrak{U}} % set of scalarizations

\newcommand{\Xcal}{\mathcal{X}}
\newcommand{\Ycal}{\mathcal{Y}}
\newcommand{\Zcal}{\mathcal{Z}}

\newcommand{\Nnt}{N_{\text{nt}}} % number of non-trivial objectives
\newcommand{\deltaprime}{\delta/\Nnt} % delta after union bound

\newcommand{\ass}{Assumption~\ref{ass:generalization}\xspace}
\newcommand{\asss}{Assumption~\ref{ass:convergence}\xspace}
\newcommand{\bothass}{Assumptions~\ref{ass:generalization} and~\ref{ass:convergence}\xspace}

\newcommand{\myparagraph}[1]{\paragraph{#1}}

\title{Generalization In Multi-Objective Machine Learning}

\author{Peter Súkeník${}^*$}
\author{Christoph H. Lampert${}^*$}
\affil{Institute of Science and Technology Austria (ISTA)\\
       Am Campus 1, 3400 Klosterneuburg, Austria\\
       \texttt{\{peter.sukenik,chl\}@ist.ac.at}}

\date{}

\begin{document}
\renewcommand*{\thefootnote}{*}
\footnotetext{equal contribution}
\renewcommand*{\thefootnote}{\arabic{footnote}}
\maketitle

\begin{abstract}
Modern machine learning tasks often require considering not just one
but multiple objectives. For example, besides the \emph{prediction quality}, 
this could be the \emph{efficiency}, \emph{robustness} or \emph{fairness} of the learned models,
or any of their combinations.
Multi-objective learning offers a natural framework for handling 
such problems without having to commit to early trade-offs.
Surprisingly, statistical learning theory so far offers almost no 
insight into the generalization properties of multi-objective 
learning. 
In this work, we make first steps to fill this gap: we establish 
foundational generalization bounds for the multi-objective setting as 
well as generalization and excess bounds for learning with 
scalarizations.
We also provide the first theoretical analysis of the
relation between the Pareto-optimal sets of the true objectives
and the Pareto-optimal sets of their empirical approximations from 
training data.
In particular, we show a surprising asymmetry: all Pareto-optimal 
solutions can be approximated by empirically Pareto-optimal ones, 
but not vice versa. 
\end{abstract}

\section{Introduction}
Traditionally, statistical machine learning 
has concentrated on solving one single-objective 
optimization problem: to minimize the average 
loss over a given training set. 
Additional quantities of interest, such as 
\emph{model complexity}, had to be either 
addressed implicitly by the choice of model 
class, or integrated into the main objective 
via weighted regularization terms.
Recently, however, additional quantities of 
interest have made it into the focus of the 
machine learning community, such as the 
\emph{fairness}, \emph{robustness},  
\emph{efficiency} or \emph{interpretability}
of the learned models. 
Optimizing these can be in conflict with the 
goal of low training loss and task-specific 
trade-offs need to be made.
Unfortunately, hard-coding such trade-offs can
have undesirable consequences, and model-selecting
them is a cumbersome process when multiple 
objectives are involved. 

To avoid the need for \emph{a priori} 
trade-offs, \emph{multi-objective learning}
has recently received increasing attention. 
Using \emph{multi-objective optimization}, 
it either finds promising trade-off parameters 
at the same time as training the actual model,
or it computes multiple solutions that reflect 
different trade-offs, ideally along the 
complete \emph{Pareto-front}\footnote{We 
define the technical terms \emph{Pareto-front}, 
\emph{Pareto-optimal} and \emph{scalarization} 
in Section~\ref{sec:background}.}
While multi-objective optimization and learning
are algorithmically rich fields, their theory 
is much less well explored. 
In particular, learning-theoretic results, such 
as generalization bounds, are almost completely 
missing.

In this work, we aim at putting multi-objective 
learning on solid theoretic foundations. 
Specifically, we present three results of fundamental 
nature for understanding the properties of learning 
with multiple objectives.
1) We show that generalization bounds of individual 
learning objectives carry over also to the situation 
when learning with multiple objectives simultaneously. 
2) We provide generalization and excess bounds 
that hold uniformly across a broad range of 
\emph{scalarization} techniques. 
3) We analyse in what sense the set of 
models that are \emph{empirically Pareto-optimal} 
(\ie optimal with respect to a training set) 
approximates the set of models that are actually 
\emph{Pareto-optimal} (\ie optimal with respect
to the data distribution). 
Our results provide theoretical justifications for 
the use of scalarization-based as well as Pareto-based 
multi-objective optimization in a learning context,
though with some caveats that have no analog in
single-objective learning.

\begin{figure}[t]
\begin{subfigure}[b]{.32\textwidth}\centering
\includegraphics[width=\textwidth]{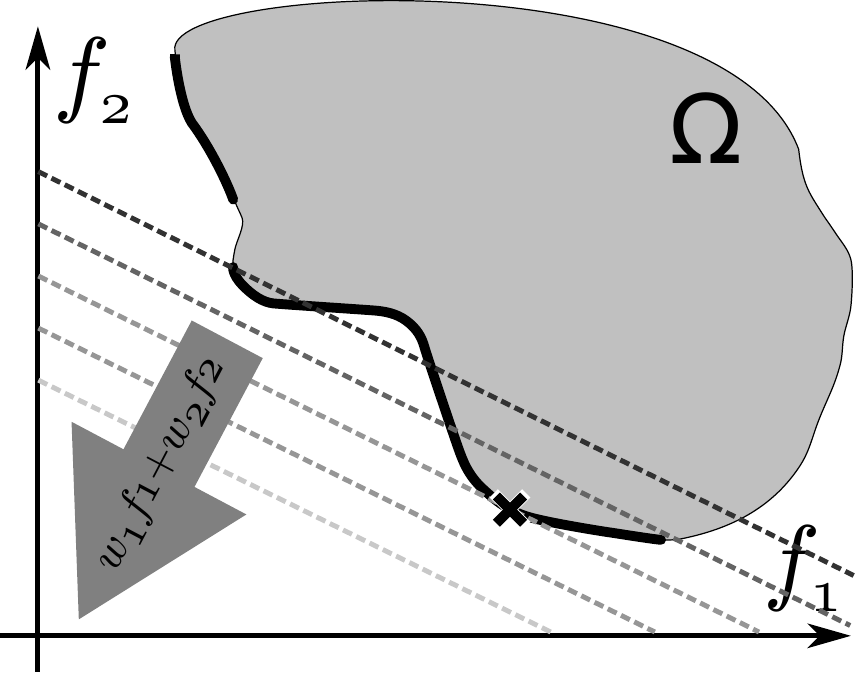}
\caption{Linear (convex) scalarization}\label{fig:convex}
\end{subfigure}
\begin{subfigure}[b]{.32\textwidth}\centering
\includegraphics[width=\textwidth]{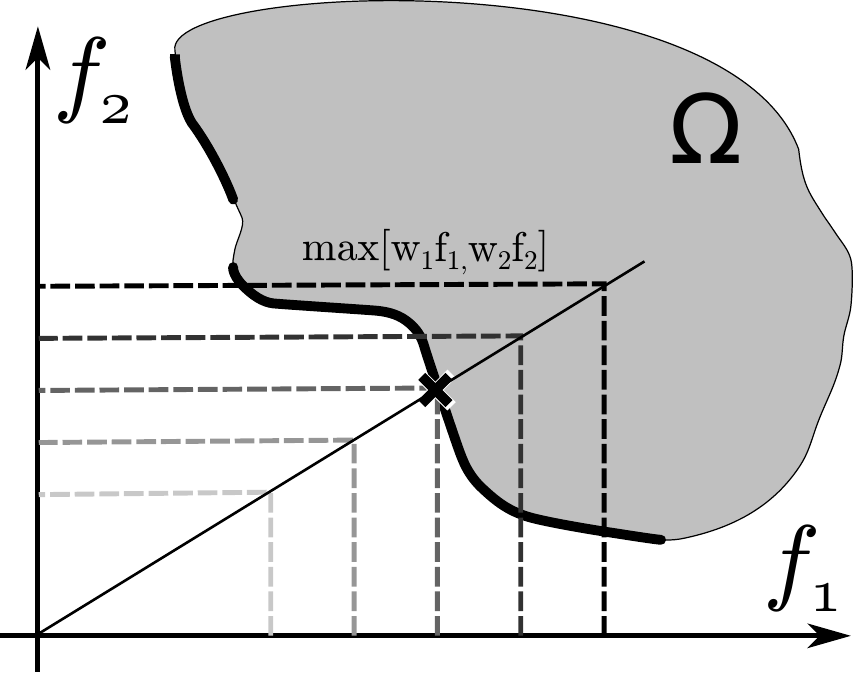}
\caption{Chebyshev scalarization}\label{fig:chebyshev}
\end{subfigure}
\begin{subfigure}[b]{.32\textwidth}\centering
\includegraphics[width=\textwidth]{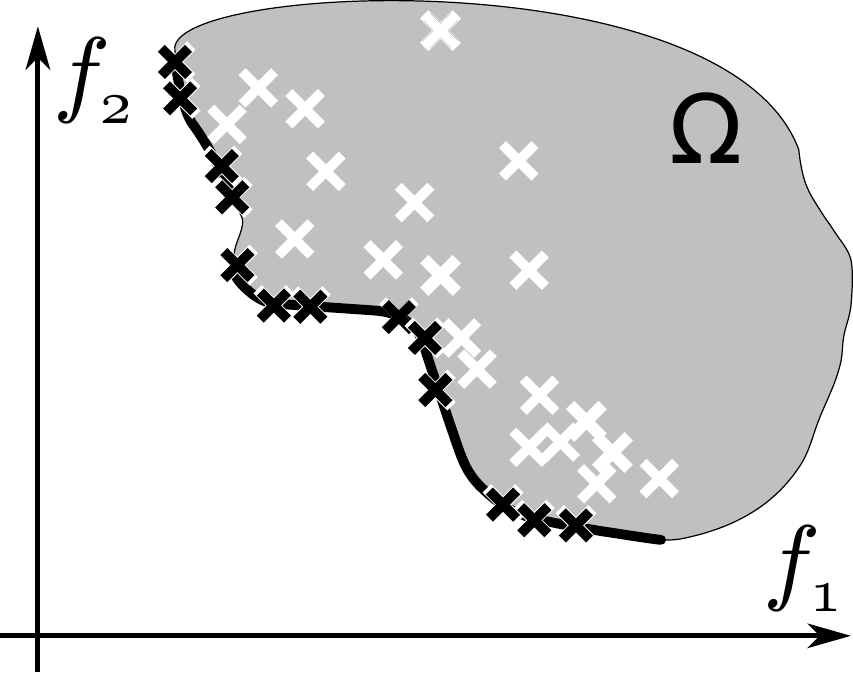}
\caption{Ensemble method}\label{fig:ensemble}
\end{subfigure}
\caption{For general multi-objective optimization problems the Pareto-front (bold curves) 
can be disconnected and non-convex. (\subref{fig:convex}) \emph{Linear scalarization} 
can find Pareto-optimal solutions on the convex hull of the front. 
(\subref{fig:chebyshev})~\emph{Chebyshev scalarization} can find solutions everywhere 
on the front. 
(\subref{fig:ensemble})~\emph{Ensemble methods} compute many solutions,
aiming for the complete Pareto-front to be represented.}
\label{fig:multiobjective}
\end{figure}

\section{Notation and background}\label{sec:background}

In this section, we introduce our notation and 
provide background information on multi-objective 
optimization and learning, as well as statistical 
learning theory.
Our description follows standard textbooks, such 
as~\citet{miettinen2012nonlinear} and \citet{nocedal1999numerical}
for optimization, and~\citet{mohri2018foundations} and \citet{shalev2014understanding}
for machine learning. More details and derivations can be found there.

\subsection{Single- and multi-objective optimization}\label{sec:optimization}
At the heart of most modern machine learning algorithms
lies an optimization step.
In standard (single-objective) optimization, one 
is given an input set, $\Omega$, and an objective 
function, $f:\Omega\to\R$.
Because the objective values are just real 
numbers, they are totally ordered: any 
two point $\omega,\omega'\in\Omega$
are \emph{comparable} in the sense that 
at least one of the relations $f(\omega)\leq f(\omega')$ 
or $f(\omega')\leq f(\omega')$ holds. 
Consequently, it is a natural question to 
ask which $\omega^*\in\Omega$ achieve the 
smallest objective value, if any.
A plethora of \emph{single-objective optimization} 
methods have been developed to answer this question, 
let it be \emph{gradient-based}~\citep{lemarechal2012cauchy,nocedal1999numerical} 
or \emph{derivative-free}~\citep{audet2017derivative,bremermann1962optimization}.

In \emph{multi-objective optimization}, one is 
given multiple objective functions, $f_1,f_2,\dots,f_N:\Omega\to\R$, 
or equivalently, one vector-valued function, $F:\Omega\to\R^N$
with $F(\omega)=\bigl(f_1(\omega),\dots,f_N(\omega)\bigr)$.
We can again define an associated order relation:
\begin{definition}
For $\omega,\omega'\in\Omega$ we say that $\omega$ \emph{weakly dominates} 
$\omega'$ if $f_j(\omega)\leq f_j(\omega')$ for all $j\in[N]$.
We say  $\omega$ \emph{strongly dominates} $\omega'$ if additionally 
$f_j(\omega)< f_j(\omega')$ for at least one $j\in[N]$.
\end{definition}

Because of the multi-dimensional nature, these orderings are only partial. 
There are pairs $\omega,\omega'\in\Omega$ that are \emph{uncomparable}, 
\ie neither $F(\omega)\preccurlyeq F(\omega')$, nor $F(\omega')\preccurlyeq F(\omega)$ holds.
Consequently, in multi-objective optimization it typically makes no 
sense to look for absolute \emph{best} solutions. Instead, one searches 
for \emph{Pareto-optimal} solutions. 

\begin{definition}
A point $\omega^*\in\Omega$ is called 
\emph{Pareto-optimal} if there is no 
other point $\omega\in\Omega$ that \emph{strongly dominates} it.
The set of all Pareto-optimal points is 
called \emph{Pareto-optimal set}. The set of corresponding 
objective value vectors is called \emph{Pareto-front}.
\end{definition}

A large number of algorithms have been developed 
also for multi-objective optimization. 
When trying to find solutions across the complete 
Pareto-front, meta-heuristics such as \emph{evolutionary algorithms}~\citep{zitzler1999multiobjective}
are often employed. 
If a single Pareto-optimal solution suffices, 
\emph{scalarizations} in combination with 
single-objective optimization can be used~\citep{geoffrion1968proper}. 
A \emph{scalarization} function, $\Ucal:\R_+^N\to\R_+$, 
combines the individual objective values into a single one. 
Prominent examples are weighted $p$-norms:
$\Ucal^{(p)}_w(x_1,\dots,x_N) = \bigl(\sum_{i\in[N]} |w_ix_i|^p\bigr)^{1/p}$ 
for $p\in(1,\infty)$, and $\Ucal^{(\infty)}_w(x_1,\dots,x_N) = \max_{i\in[N]} |w_ix_i|$, 
where $w\in W\subset\R^N_+$ is a vector of weights 
that encode a trade-off between the different objectives.

Arguably the most popular choice of scalarization is 
the $L^1$-norm with weights in the probability simplex
$\Delta_N=\{w\in\R^N_+: \sum_i w_i=1\}$. This means,
one forms \emph{convex combinations} of the individual 
objectives~\citep{gass1955computational}.
For any non-zero choice of weights, minimizers of 
this resulting scalarized objective will be 
Pareto-optimal~\citep{geoffrion1968proper}.
However, the set of solutions obtainable 
by varying the weights might not recover
the complete Pareto-front, unless the 
optimization problem is convex~\citep{censor1977pareto}.
In contrast, the choice $p=\infty$ (called 
\emph{weighted Chebyshev norm}) allows 
recovering the complete Pareto front 
when varying the weights in $\Delta_N$~\citep[Chapter~3.4]{miettinen2012nonlinear}.   % ?
Figure~\ref{fig:multiobjective} illustrates these concepts.

\subsection{Single- and multi-objective learning}\label{sec:learning}
Our analysis in this work applies to supervised 
as well as unsupervised learning. 
Therefore, we adopt a notation that allows 
expressing both of these cases in a single 
concise way. 
Let $p(z)$ be a fixed but unknown data 
distribution over a data space $\Zcal$.
We denote by $\Hcal$ a \emph{hypothesis set} 
and $\ell:\Zcal\times\Hcal\to\R_+$ a \emph{loss function}.
For supervised learning with $\Hcal\subset\{h:\Xcal\to\Ycal\}$, 
one uses $\Zcal=\Xcal\times\Ycal$, and  
$\ell(z,h)=L(y,h(x))$, where $L:\Ycal\times\Ycal\to\R_+$ 
measures, \eg, the classification or regression accuracy.
For unsupervised learning, one uses $\Zcal=\Xcal$, 
and $\ell$ measures, \eg, the reconstruction error 
of a clustering or dimensionality reduction step.

\myparagraph{Single-objective learning.}
Standard (single-objective) learning has the 
goal of identifying a hypothesis with small 
\emph{risk} (\emph{expected loss}),
$\Lcal(h)=\E_{z\in\Zcal}[\ell(z,h)]$. 
To approximate this uncomputable quantity, 
the learner uses a \emph{training set}, 
$S=\{z_1,\dots,z_n\}$ to computes the
\emph{empirical risk}, 
$\hatLcal(h)=\frac{1}{n}\sum_{i=1}^n \ell(z_i,h)$.

\emph{Statistical learning theory} studies 
how well the empirical risk 
approximates the true risk 
and under which conditions minimizing 
the (computable) empirical risk is a good 
strategy for finding solution with low true 
risk. 
Many corresponding results are known. 
In particular, under well-understood 
conditions on $\Hcal$ and $S$, one can 
prove that, with high probability over 
the sampling of $S$, the true risk is well 
approximated by the empirical risk,
uniformly across all hypotheses.
Mathematically, such a guarantee has the 
form of a \emph{generalization bound}:
\begin{align}
\forall\delta\in(0,1)\quad\Pr\Bigl\{ \forall h\in\Hcal: |\Lcal(h)-\hatLcal(h)| & \leq \Ccal(n,\Hcal,\delta) \Bigr\} \geq 1-\delta.
\label{eq:single-generalization}
\end{align}
The problem-dependent \emph{generalization term} $\Ccal(n,\Hcal,\delta)$ 
typically consists of a \emph{complexity} component that reflects the 
expressive power of the hypothesis class, and a \emph{confidence} 
component that reflects the uncertainty due to finite sampling effects. 
Ideally, both components will converge to $0$ when the number 
of samples grows to infinity. 

From bounds of the form~\eqref{eq:single-generalization}
one can derive guarantees that, with high probability,
solutions obtained by minimizing the empirical risk have 
close to optimal true risk.
Formally, for $\hat h^*\in\argmin_{h\in\Hcal}\hatLcal(h)$, 
an \emph{excess risk bound} holds:
\begin{align}
\forall\delta\in(0,1)\quad\Pr\Bigl\{ \Lcal(\hat h^*) &\leq \inf_{h\in\Hcal}\Lcal(h) + \Ccal'(n,\Hcal,\delta) \Bigr\} \geq 1-\delta,
\end{align}
where $\Ccal'(n,\Hcal,\delta)$ is another generalization term as above.

\myparagraph{Multi-objective learning.}
In multi-objective learning, multiple target 
objectives, $\Lcal_1,\dots,\Lcal_N$, characterize 
different properties of interest of the hypotheses. 
Estimating them from a (single) dataset yields 
empirical objectives, $\hatLcal_1,\dots,\hatLcal_N$.
In contrast to the single-objective situation
where the objective function is almost always
related to a measure of prediction quality, 
the multi-objective setting provides a principled
framework for expressing also other relevant 
quantities of a machine learning model, such
as \emph{efficiency}, \emph{robustness}, or 
\emph{fairness}.
Consequently, we allow the objectives to also 
have other forms than just expected values over 
per-sample loss functions, and their empirical 
estimates are not restricted to per-sample averages.
As discussed in Section~\ref{sec:optimization},
the multi-objective setting does not induce a 
total ordering of the hypotheses. Consequently,
\emph{a priori} there will be no overall 
\emph{best} hypothesis anymore. 
Instead, there there are two sets of 
Pareto-optimal hypotheses:

\begin{definition}
a) A hypothesis $h\in\Hcal$ is called \emph{empirically Pareto-optimal} 
if it is Pareto-optimal with respect to the multi-objective optimization 
problem of minimizing $\hatLcal_1(h),\dots,\hatLcal_N(h)$ (with are
computed from some training set $S$).
The set of all such hypotheses we call the \emph{empirically Pareto-optimal set}.

b) A hypothesis $h\in\Hcal$ is called \emph{(truly) Pareto-optimal} 
if it is Pareto-optimal with respect to the multi-objective optimization 
problem of minimizing $\Lcal_1(h),\dots,\Lcal_N(h)$.
The set of all such hypotheses we call the \emph{(truly) Pareto-optimal set}.
\end{definition}

Analogously to single-objective learning, we are most interested 
in finding truly optimal hypotheses (here, \eg, the truly 
Pareto-optimal set), as these can be expected to work well on future
data. However, we can only compute solutions to the empirical problem 
(the empirically Pareto-optimal set). If solutions to the latter problem 
approximate the former it is called \emph{multi-objective generalization}.

In recent years, multi-objective learning has 
received increasing attention in the machine 
learning community, and a number of algorithms 
have been proposed for it.
In their easiest form, one simply picks a 
scalarization method and solves the resulting 
single-objective optimization problem with 
fixed scalarization weights or one optimizes
over those as well~\citep{cortes2020agnostic,deist2021multi,fliege2000steepest}.
Alternatively, one can search for hypotheses 
along the complete (empirically) Pareto-front, 
using, \eg, ensemble techniques~\citep{liu1995learning,van1998evolutionary}, 
model conditioning~\citep{ruchte2021scalable},
or hypernetworks~\citep{navon2021learning}. 

Given the long tradition and algorithmic diversity, 
one could expect \emph{multi-objective statistical learning theory} 
also to be a rich field that provides precise 
quantifications of the relations between true 
and empirical objective (generalization bounds), 
as well as relation between the empirical and 
true Pareto-optimal sets (excess bounds).
Surprisingly, this is not the case, and hardly 
any such results exist in the literature. 

\section{Related work}\label{sec:related}
Solving problems with multiple objectives has 
a long tradition in artificial intelligence~\citep{aziz2016computing,purshouse2002multi,rahwan2008pareto,zhou2011multiobjective}, 
game theory~\citep{fudenberg1991game,pardalos2008pareto}, 
and economics~\citep{hochman1969pareto,keeney1993decisions}.
Since the 1990s it has also attracted attention from 
the machine learning community, \eg~\citet{fieldsend2005pareto,goldberg1989genetic,jin2006multi}. 
Existing works predominantly study the 
problem from an algorithmic perspective, 
in particular proposing and analyzing 
new optimization techniques. 
Mirroring the corresponding developments in 
multi-objective optimization, this includes 
methods for efficiently finding individual 
Pareto-optimal solutions, \eg~\citet{cortes2020agnostic,van2014multi,Ye2021multiobjective}, 
as well as exploring the complete Pareto 
front~\citep{jin2008pareto,navon2021learning,przybylski2017multi,ruchte2021scalable,vamplew2011empirical,van2014multi,zhu2019multi}.
Works in both directions implicitly assume that 
better results of the empirical learning task 
should translate to better results on future data.
So far, this \emph{generalization} aspect was 
studied only empirically.
Theoretical results rather focused on the optimization aspect, 
\eg studying \emph{computational complexity}~\citep{teytaud2007hardness,wang2003learning} 
or \emph{convergence rates}~\citep{stark2003rate},
but not statistical generalization.
A notable exception is~\citet{cortes2020agnostic}, 
which we discuss in detail in the Section~\ref{subsec:cortes}.

\section{Main results}\label{sec:main}
In this section we formally state and 
discuss our main results: generalization 
and excess bounds for scalarizations 
and for Pareto-fronts.
For maximal generality, we formulate the results on 
the generic level introduced in Section~\ref{sec:background}. 
We will discuss instantiations that either improve 
over related existing work or provide new insights 
in Sections~\ref{sec:applications} and we provide
a high-level overview of potential additional
applications in Section~\ref{sec:scenarios}.

\myparagraph{Assumptions.}
Because the multi-objective setting strictly 
generalizes the single-objective one, 
multi-objective generalization is not possible 
unless at least single-objective generalization 
holds.
Therefore, for all our results we adopt the 
following assumption.
\begin{assumption}\label{ass:generalization}
For each objective individually a generalization bound of the form \eqref{eq:single-generalization} holds.
\end{assumption}

Note that \ass is technically easy to fulfill, at least 
for bounded objectives, by setting the required 
generalization terms, $\Ccal_i(n,\Hcal,\delta)$ for $i\in[N]$,     
to large enough constants. 
Our results do hold for such a choice, but their 
interpretation would mostly not be very interesting. 
Therefore, whenever we want to interpret results 
in the light of their approximation quality, we 
additionally make the following assumption.
\begin{assumption}\label{ass:convergence}
For each $i\in[N]$ and for each $\delta\in(0,1)$, it holds that $\Ccal_i(n,\Hcal,\delta) \stackrel{n\to\infty}\to 0$.
\end{assumption}

As we detail in Section~\ref{sec:scenarios}, \ass 
and \asss are fulfilled for many quantities of 
interest related to the \emph{accuracy}, \emph{fairness}, 
\emph{robustness} or \emph{efficiency} of 
machine learning systems.
Noteworthy special cases are objectives that are 
data-independent functions of only the hypothesis, 
for example, regularization terms. 
We say that such objectives \emph{generalize trivially},
because they fulfill $\Lcal(h) = \hatLcal(h)$ 
for all datasets and all $h\in\Hcal$, and 
therefore generalization bounds of the form 
\eqref{eq:single-generalization} hold for them 
trivially with 0 as generalization term.

\subsection{Multi-objective generalization}

Our first result states that if generalization 
bounds hold individually for each objective, 
then they hold also jointly in the multi-objective 
setting, where the empirical objectives are computed
from a single dataset, at only a minor loss of confidence. 
\begin{lemma}[Multi-Objective Generalization Bound]\label{lem:generalization}
Let $\Nnt$ be the number of non-trivial objectives. 
Let $S$ be a random dataset of size $n$. 
For each $i\in[N]$, let $\hatLcal_i$ be an empirical 
estimate of $\Lcal_i$ based on a subset $S_i\subset S$ 
of size $n_i$. Then 
it holds with probability at least $1-\delta$, 
\begin{align}
\forall i\in[N], \ \forall h\in\Hcal: |\Lcal_i(h)-\hatLcal_i(h)| \leq \Ccal_i(n_i,\Hcal,\deltaprime).
\end{align}
\end{lemma}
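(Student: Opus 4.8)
The plan is to apply a union bound over the individual generalization guarantees provided by \ass. By hypothesis, for each objective $i \in [N]$ we have a bound of the form~\eqref{eq:single-generalization}, which tells us that for any confidence level, the empirical estimate $\hatLcal_i$ computed from its associated subset $S_i$ approximates the true risk $\Lcal_i$ uniformly over $\Hcal$, except on a failure event of controlled probability. The key observation is that objectives which generalize trivially contribute a failure probability of $0$, since they satisfy $\Lcal_i(h) = \hatLcal_i(h)$ deterministically; hence only the $\Nnt$ non-trivial objectives consume any confidence budget.

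First I would instantiate \ass separately for each non-trivial objective $i$, but with the sharper confidence parameter $\delta' = \deltaprime$ in place of $\delta$. This yields, for each such $i$, an event $E_i$ of probability at most $\deltaprime$ on which the bound $|\Lcal_i(h)-\hatLcal_i(h)| \leq \Ccal_i(n_i,\Hcal,\deltaprime)$ fails for some $h \in \Hcal$. Next I would take the union over the non-trivial objectives: the probability that at least one $E_i$ occurs is at most $\Nnt \cdot \deltaprime = \delta$. On the complementary event, which therefore has probability at least $1-\delta$, all the non-trivial bounds hold simultaneously, and the trivial ones hold unconditionally. This establishes the claimed simultaneous guarantee over all $i \in [N]$ and all $h \in \Hcal$.

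The only subtlety worth flagging is that the subsets $S_i \subset S$ need not be disjoint, so the failure events $E_i$ may be statistically dependent. This is precisely why a union bound, rather than an independence argument, is the right tool: the union bound requires no independence assumption and controls the probability of the union by the sum of the marginal probabilities regardless of correlation. The per-objective bounds from \ass hold with respect to the sampling of each $S_i$, and since each $S_i$ is a subset of the single dataset $S$, each $E_i$ is measurable with respect to the sampling of $S$; the union bound then applies directly to events defined on the common probability space. I expect this dependence issue to be the main conceptual point to address, but it poses no obstacle to the proof itself, since the union bound sidesteps it entirely at the mild cost of replacing $\delta$ by $\deltaprime$ in each invocation.
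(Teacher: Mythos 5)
Your proposal is correct and matches the paper's intended argument exactly: the paper itself states that the lemma follows from \ass by a union bound and gives no further details, while you correctly spell out that only the $\Nnt$ non-trivial objectives consume confidence budget (hence $\deltaprime$ rather than $\delta/N$) and that the union bound needs no independence between the events arising from possibly overlapping subsets $S_i$.
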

Lemma~\ref{lem:generalization} is in fact a straight-forward 
consequence of \ass, requiring only a union-bound argument 
as proof. 
We state it explicitly nevertheless because it has 
not appeared in this form in the literature so far.

\subsection{Generalization and excess bounds for scalarizations}
A common way for learning in a multi-objective setting 
is by performing single-objective learning for one or 
multiple scalarizations.
To keep the notation concise, for any scalarization 
$\Ucal:\R^N_+\to\R_+$ and $h\in\Hcal$, we abbreviate
$\Lcal_{\Ucal}(h):=\Ucal(\Lcal_1(h),\dots,\Lcal_N(h))$, 
$\hatLcal_{\Ucal}(h):=\Ucal(\hatLcal_1(h),\dots,\hatLcal_N(h))$.

\begin{theorem}[Generalization and Excess Bounds for Scalarizations]\label{thm:scalarization}
Assume the same setting as for Lemma~\ref{lem:generalization}.
Let $\Ufrak=\{\Ucal:\R^N\to\R_+\}$ be a set of scalarizations, 
each of which is $L_{\Ucal}$-Lipschitz continuous with respect 
to some monotonic norm $\|\cdot\|_{\Ucal}$. 
Then, for all $\delta>0$ the following two statements hold 
with probability at least $1-\delta$.

a) For all $\Ucal\in\Ufrak$ and $h\in\Hcal$:
\begin{align}
\big| \Lcal_{\Ucal}(h) - \hatLcal_{\Ucal}(h) \big| &\leq L_{\Ucal}\big\|\bigl(\Ccal_1(n_1,\Hcal,\deltaprime),\dots,\Ccal_N(n_N,\Hcal,\deltaprime)\bigr)\big\|_{\Ucal}.
\label{eq:scalarization-generalization}
\end{align}

b) For all $\Ucal\in\Ufrak$, for all $\hat h_{\Ucal}^*\in\argmin_{h\in\Hcal}\hatLcal_{\Ucal}(h)$, and for all $h\in\Hcal$:
\begin{align}
\Lcal_{\Ucal}(\hat h_{\Ucal}^*)  &\leq  \Lcal_{\Ucal}(h) + 2L_{\Ucal}\big\|\bigl(\Ccal_1(n_1,\Hcal,\deltaprime),\dots,\Ccal_N(n_N,\Hcal,\deltaprime)\bigr)\big\|_{\Ucal}.
\label{eq:scalarization-excess}
\end{align}
\end{theorem}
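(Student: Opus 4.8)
The plan is to derive both statements from a single high-probability event, namely the event $E$ that the conclusion of Lemma~\ref{lem:generalization} holds. Since $E$ has probability at least $1-\delta$ and makes no reference to the scalarizations, everything downstream is deterministic once we condition on $E$, so no further union bound over $\Ufrak$ is required. This is exactly what yields uniformity over the (possibly infinite) family of scalarizations and lets both parts share the same confidence $1-\delta$.

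For part a), I would condition on $E$ and fix an arbitrary $\Ucal\in\Ufrak$ and $h\in\Hcal$. Writing $a=(\Lcal_1(h),\dots,\Lcal_N(h))$ and $b=(\hatLcal_1(h),\dots,\hatLcal_N(h))$, the $L_{\Ucal}$-Lipschitz property of $\Ucal$ with respect to $\|\cdot\|_{\Ucal}$ gives
\begin{align}
\big|\Lcal_{\Ucal}(h)-\hatLcal_{\Ucal}(h)\big| = \big|\Ucal(a)-\Ucal(b)\big| \leq L_{\Ucal}\,\|a-b\|_{\Ucal}.
\end{align}
The vector $a-b$ has entries $\Lcal_i(h)-\hatLcal_i(h)$, which on $E$ satisfy $|\Lcal_i(h)-\hatLcal_i(h)|\leq\Ccal_i(n_i,\Hcal,\deltaprime)$ componentwise. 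Invoking monotonicity of the norm --- the property that $\|x\|_{\Ucal}\leq\|y\|_{\Ucal}$ whenever $|x_i|\leq|y_i|$ for all $i$ --- I would bound $\|a-b\|_{\Ucal}$ by the norm of the vector of generalization terms, which gives exactly~\eqref{eq:scalarization-generalization}.

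For part b), I would run the standard three-step excess-risk argument on the same event $E$, hence for the same collection of scalarizations. Abbreviating the right-hand side of~\eqref{eq:scalarization-generalization} by $C_{\Ucal}:=L_{\Ucal}\bigl\|\bigl(\Ccal_1(n_1,\Hcal,\deltaprime),\dots,\Ccal_N(n_N,\Hcal,\deltaprime)\bigr)\bigr\|_{\Ucal}$, I would chain
\begin{align}
\Lcal_{\Ucal}(\hat h_{\Ucal}^*) \leq \hatLcal_{\Ucal}(\hat h_{\Ucal}^*) + C_{\Ucal} \leq \hatLcal_{\Ucal}(h) + C_{\Ucal} \leq \Lcal_{\Ucal}(h) + 2C_{\Ucal},
\end{align}
where the outer inequalities apply part a) to $\hat h_{\Ucal}^*$ and to $h$ respectively, and the middle inequality uses that $\hat h_{\Ucal}^*$ minimizes $\hatLcal_{\Ucal}$ over $\Hcal$. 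This is precisely~\eqref{eq:scalarization-excess}.

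The main obstacle here is conceptual rather than computational: one must check that the single event $E$ from Lemma~\ref{lem:generalization} suffices for uniformity over all of $\Ufrak$ even though $L_{\Ucal}$ and $\|\cdot\|_{\Ucal}$ may vary across scalarizations, and that the passage from a componentwise loss bound to a norm bound is legitimate. The latter is exactly where the monotonicity hypothesis on $\|\cdot\|_{\Ucal}$ is indispensable and cannot be dropped; without it, componentwise control of $|\Lcal_i-\hatLcal_i|$ would not translate into control of $\|a-b\|_{\Ucal}$.
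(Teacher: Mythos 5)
Your proposal is correct and follows essentially the same route as the paper's own proof: condition once on the event of Lemma~\ref{lem:generalization}, apply the Lipschitz property followed by the norm's monotonicity for part a), and run the standard three-term excess-risk decomposition for part b). The only difference is cosmetic --- you carry the factor $L_{\Ucal}$ explicitly through part b), which the paper's appendix display omits (evidently a typo there), so your write-up is if anything slightly cleaner.
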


\textbf{Proof sketch.} We provide the main arguments of the proofs here. The complete proofs are provided in Appendix~\ref{sec:proofs}.
a) The Lipschitz property implies that the difference of scalarized 
objectives is upper bounded by the norm of the differences in 
objective values. By the norm's monotonicity and 
Lemma~\ref{lem:generalization}, this is again bounded 
by the norm of the generalization terms. 
b) from $\hatLcal_{\Ucal}(\hat h_{\Ucal}^*)\leq \hatLcal_{\Ucal}(h)$ 
it follows that $\Lcal_{\Ucal}(\hat h_{\Ucal}^*) -\Lcal_{\Ucal}(h) \leq \Lcal_{\Ucal}(\hat h_{\Ucal}^*) - \hatLcal_{\Ucal}(\hat h_{\Ucal}^*) + \hatLcal_{\Ucal}(h) -\Lcal_{\Ucal}(h)$.
Using a) we can bound the difference between the first two terms as 
well as the difference between the last two terms on the right hand
side each by the norm of the generalization terms.

\myparagraph{Discussion.} 
Theorem \ref{thm:scalarization} establishes \emph{generalization} 
and \emph{excess bounds} for the situation of scalarization-based
multi-objective learning.
Their relevance lies not only in the inequalities 
\eqref{eq:scalarization-generalization} and \eqref{eq:scalarization-excess} 
themselves, which have the standard single-objective form, 
but also in the fact that these hold \emph{uniformly} over all 
scalarizations $\Ucal\in\Ufrak$. 
This implies that 
one can solve an arbitrary number of scalarized 
problems without suffering a loss of confidence 
in the theoretical guarantees.
That is in contrast to other situations of repeated 
learning, \eg hyperparameter-tuning on a validation 
set, where the statistical guarantees deteriorate 
with the number of hypotheses considered, because 
of the \emph{multiple hypothesis testing} phenomenon~\citep[Chapter~11]{shalev2014understanding}.
Despite its simplicity, the theorem improves over 
prior work, \citet{cortes2020agnostic}, which proved 
guarantees that depend on the size of $\Ufrak$. 
For a more detailed discussion see Section~\ref{subsec:cortes}.

\subsection{Pareto excess bounds}
We now provide a formal analysis of the relation 
between the set of Pareto-optimal hypotheses 
and the set of empirically Pareto-optimal hypotheses. 
First, we show that any two elements of the two Pareto-optimal 
sets fulfill an excess-type inequality with respect to at least 
some of the objectives.
\begin{theorem}\label{thm:pareto-single}
Assume the same situation as for Lemma~\ref{lem:generalization}.
Then, for any $\delta>0$, it holds 
with probability at least $1-\delta$:
for all Pareto-optimal $h^*\!\in\!\Hcal$ and 
empirically Pareto-optimal $\hat h^*\!\in\!\Hcal$ 
there exists a non-empty subset $I\subset[N]$, such 
that 
\begin{align}
\forall i\in I:\quad \Lcal_i(\hat h^*) &\leq \Lcal_i(h^*) +
2\Ccal_i(n_i,\Hcal,\deltaprime).\label{eq:pareto-single}
\end{align}
\end{theorem}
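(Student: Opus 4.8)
The plan is to reduce the statement to a deterministic claim by first conditioning on the high-probability event supplied by Lemma~\ref{lem:generalization}. Concretely, I would invoke that lemma to guarantee that, with probability at least $1-\delta$, the two-sided estimate $|\Lcal_i(h)-\hatLcal_i(h)|\leq\Ccal_i(n_i,\Hcal,\deltaprime)$ holds \emph{simultaneously} for every objective $i\in[N]$ and every hypothesis $h\in\Hcal$. All subsequent steps take place on this single event, so no further probabilistic reasoning is required and the argument becomes entirely deterministic.

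Fix a truly Pareto-optimal $h^*$ and an empirically Pareto-optimal $\hat h^*$. The core idea is to let the objectives on which $\hat h^*$ is empirically no worse than $h^*$ play the role of the set $I$; that is, I would set
\[
I=\{\,i\in[N]:\ \hatLcal_i(\hat h^*)\leq\hatLcal_i(h^*)\,\}.
\]
The only nontrivial point is that $I$ is non-empty, and this is exactly where empirical Pareto-optimality enters. If $I$ were empty, then $\hatLcal_i(h^*)<\hatLcal_i(\hat h^*)$ for \emph{every} $i\in[N]$, which by definition means that $h^*$ strongly dominates $\hat h^*$ with respect to the empirical objectives, contradicting the empirical Pareto-optimality of $\hat h^*$. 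Hence $I\neq\emptyset$.

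It then remains to verify the inequality on $I$, which is a routine two-step generalization sandwich. For each $i\in I$ I would chain the generalization bound applied to $\hat h^*$, the defining inequality of $I$, and the generalization bound applied to $h^*$:
\[
\Lcal_i(\hat h^*)\ \leq\ \hatLcal_i(\hat h^*)+\Ccal_i(n_i,\Hcal,\deltaprime)\ \leq\ \hatLcal_i(h^*)+\Ccal_i(n_i,\Hcal,\deltaprime)\ \leq\ \Lcal_i(h^*)+2\Ccal_i(n_i,\Hcal,\deltaprime),
\]
which is precisely \eqref{eq:pareto-single}.

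I expect the only genuinely delicate step to be the non-emptiness of $I$, since it requires translating the definition of strong domination into its correct contrapositive form; everything else is bookkeeping. I would also flag a structural observation: the argument never actually uses that $h^*$ is \emph{truly} Pareto-optimal, only that $\hat h^*$ is empirically Pareto-optimal. This suggests that the asymmetry advertised in the abstract is carried by the \emph{direction} of the bound rather than by this particular inequality, and I would keep that in mind when stating the companion (reverse-direction) result.
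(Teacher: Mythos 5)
Your proof is correct and follows essentially the same route as the paper's: both arguments rest on the observation that empirical Pareto-optimality of $\hat h^*$ forbids strict empirical domination by $h^*$ in all coordinates, combined with the two-sided generalization sandwich from Lemma~\ref{lem:generalization}; you merely phrase it constructively (exhibiting $I$ explicitly) where the paper argues by contradiction. Your closing remark that the argument never uses the true Pareto-optimality of $h^*$ is also made verbatim in the paper's discussion following the theorem.
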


\textbf{Proof sketch.} The proof works by contradiction: 
assume that a pair $(h^*,\hat h*)$ exists such that for no 
index set inequality~\eqref{eq:pareto-single} would hold. 
Then, using Lemma~\ref{lem:generalization}, one could show 
that $h^*$ strongly dominates $\hat h^*$ with respect to the 
empirical objectives, which is a contradiction to the optimality 
of $\hat h^*$.
For the formal steps, see Appendix~\ref{sec:proofs}.
Like the sketch, the formal proof does not actually make use of 
the optimality of $h^*$. This implies that Theorem~\ref{thm:pareto-single} 
holds in fact for all $h\in\Hcal$, making it even more apparent 
that excess bound with respect to individual objectives are 
of limited use for studying multi-objective generalization.

For multi-objective learning the most relevant question is if 
there is an analog of Theorem~\ref{thm:pareto-single} for the 
case of $I=[N]$, \ie if by finding the empirical Pareto-curve 
one also approximately recovers the true Pareto-curve with 
respect to \emph{all} objectives.
This is formalized in the following theorem.

\begin{theorem}[Pareto Excess Bound]\label{thm:pareto}
Assume the same setting as for Lemma~\ref{lem:generalization}.
Then, for any $\delta>0$, it holds 
with probability at least $1-\delta$.

a) For all Pareto-optimal $h^*\!\in\!\Hcal$ there 
exists an empirically Pareto-optimal $\hat h^*\!\in\!\Hcal$ with % such that 
\begin{align}
\forall i\in [N]:\quad \Lcal_i(\hat h^*) &\leq \Lcal_i(h^*) +
2\Ccal_i(n_i,\Hcal,\deltaprime).\label{eq:pareto}
\intertext{b) Assume that the Pareto-front is \emph{ray complete},
\ie for all $R\in\{(r_1,\dots,r_N): r_i>0 \text{ for $i\in[N]$}\}$, there exists 
an $h\in\Pcal$ with $\bigl(\Lcal_1(h),\dots,\Lcal_N(h)\bigr)\propto R$.
Then, for all empirically Pareto-optimal $\hat h^*\!\in\!\Hcal$, 
there exists a Pareto-optimal $h^*\!\in\!\Hcal$ with}% such that 
\forall i\in [N]:\quad \Lcal_i(\hat h^*) &\leq \Lcal_i(h^*) + 2\Ccal_i(n_i,\Hcal,\deltaprime).
\label{eq:pareto-inv}
\end{align}
\end{theorem}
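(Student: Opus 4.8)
The plan is to condition throughout on the event of Lemma~\ref{lem:generalization}, which holds with probability at least $1-\delta$ and guarantees $|\Lcal_i(h)-\hatLcal_i(h)|\le C_i$ for all $i\in[N]$ and all $h\in\Hcal$, where I abbreviate $C_i:=\Ccal_i(n_i,\Hcal,\deltaprime)$. Both parts are then purely deterministic consequences of this two-sided estimate, so no further probabilistic reasoning is needed once the event is fixed.

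For part~a), given a truly Pareto-optimal $h^*$ I would first invoke external stability of the empirically Pareto-optimal set to produce an empirically Pareto-optimal $\hat h^*$ that weakly dominates $h^*$ with respect to the empirical objectives, i.e.\ $\hatLcal_i(\hat h^*)\le\hatLcal_i(h^*)$ for all $i$. (For finite $\Hcal$, or whenever the set of achievable empirical value vectors is closed and bounded below, such a dominating empirically Pareto-optimal point always exists.) The claimed inequality then follows from two applications of Lemma~\ref{lem:generalization}: $\Lcal_i(\hat h^*)\le\hatLcal_i(\hat h^*)+C_i\le\hatLcal_i(h^*)+C_i\le\Lcal_i(h^*)+2C_i$.

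For part~b) the key idea is to collapse the coordinate-wise target into a single scalar comparison by exploiting ray completeness. Fix an empirically Pareto-optimal $\hat h^*$ and set $v_i:=\Lcal_i(\hat h^*)$. Assuming for the moment that $v_i>2C_i$ for every $i$, I would apply ray completeness to the positive vector $v-2C:=(v_1-2C_1,\dots,v_N-2C_N)$, obtaining a truly Pareto-optimal $h^*$ with $\bigl(\Lcal_1(h^*),\dots,\Lcal_N(h^*)\bigr)=\mu\,(v-2C)$ for some $\mu>0$. A direct rearrangement shows that the desired bound $\Lcal_i(\hat h^*)\le\Lcal_i(h^*)+2C_i$ holds for every $i$ \emph{simultaneously} if and only if $\mu\ge 1$, so it remains to rule out $\mu<1$. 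If $\mu<1$, then $\Lcal_i(h^*)=\mu(v_i-2C_i)<v_i-2C_i$ for all $i$, and Lemma~\ref{lem:generalization} upgrades this to $\hatLcal_i(h^*)\le\Lcal_i(h^*)+C_i<v_i-C_i\le\hatLcal_i(\hat h^*)$ for all $i$; hence $h^*$ would strongly dominate $\hat h^*$ with respect to the empirical objectives, contradicting the empirical Pareto-optimality of $\hat h^*$.

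The main obstacle is the degenerate case where some coordinate satisfies $v_i\le 2C_i$ (in particular $v_i=0$), for which the shifted vector $v-2C$ is no longer strictly positive and ray completeness does not apply directly. On such coordinates the target inequality holds trivially for any $h^*$ with nonnegative objectives, so the real difficulty is to still produce a \emph{single} truly Pareto-optimal point meeting the bound on all the non-degenerate coordinates at once; I expect to handle this by applying ray completeness to a perturbed positive direction that agrees with $v-2C$ on the non-degenerate coordinates and then passing to the limit, or by assuming the objectives are bounded away from $0$ relative to the generalization terms. This degeneracy, together with the external-stability input in part~a), is precisely where the asymmetry between the two directions originates.
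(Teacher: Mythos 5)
Your part a) coincides with the paper's proof: both produce an empirically Pareto-optimal $\hat h^*$ that weakly dominates $h^*$ in the empirical objectives and then apply the two-sided bound of Lemma~\ref{lem:generalization} twice; your explicit remark that this ``external stability'' step needs the set of achievable empirical value vectors to admit dominating Pareto points is a condition the paper also relies on silently. For part b), in the non-degenerate regime $\Lcal_i(\hat h^*)>2c_i$ for all $i$ (writing $c_i=\Ccal_i(n_i,\Hcal,\deltaprime)$), your argument is correct and takes a genuinely different, and arguably cleaner, route than the paper's. You apply ray completeness directly to the direction $v-2c$ with $v_i=\Lcal_i(\hat h^*)$, observe that the target inequality in every coordinate is equivalent to the single condition $\mu\ge 1$ on the scaling factor, and rule out $\mu<1$ by showing the resulting $h^*$ would strongly dominate $\hat h^*$ empirically. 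The paper instead routes through the Chebyshev scalarization (Lemma~\ref{lem:chebyshev}), the uniform scalarization bound of Theorem~\ref{thm:scalarization}, and a family of additive shifts engineered so that all products $w_jc_j$ coincide. Your version avoids that machinery and makes the role of ray completeness transparent: it is exactly what lets you place a true Pareto point on the ray along which the coordinate-wise excess bound collapses to a one-parameter statement.

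The gap is the degenerate case, which you correctly flag but do not close. If $\Lcal_i(\hat h^*)\le 2c_i$ for some $i$ (including trivially generalizing objectives with $c_i=0$ and $\Lcal_i(\hat h^*)=0$), the direction $v-2c$ is not strictly positive and ray completeness cannot be invoked. Your first proposed fix --- perturbing the degenerate coordinates to some $\epsilon>0$ and passing to the limit --- does not obviously work: the contradiction for $\mu_\epsilon<1$ requires $\hatLcal_i(h^*_\epsilon)\le\hatLcal_i(\hat h^*)$ on the degenerate coordinates as well, but the only available estimate is $\hatLcal_i(h^*_\epsilon)\le\mu_\epsilon\epsilon+c_i$, which does not yield the needed inequality whenever $\hatLcal_i(\hat h^*)<c_i$, no matter how small $\epsilon$ is; moreover $\mu_\epsilon$ is not under your control as $\epsilon\to0$. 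Your second fix (assuming the objectives bounded away from $0$ relative to the $c_i$) proves a weaker theorem than stated. The paper's device for exactly this problem is the additive shift of Lemma~\ref{lem:shift}: replace the objectives by $\Lcal_i+K_i$ with $K_i$ chosen so that all shifted values exceed $2c_i$, note that Pareto sets and generalization terms are unchanged, and run the argument for the shifted problem; this could be grafted onto your proof, though it then requires ray completeness of the \emph{shifted} front rather than the original one, so the degenerate case is delicate in either approach and needs an explicit treatment rather than a sketch.
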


\textbf{Proof sketch.} 
To prove part a), we make use of the fact that 
$h^*\in\Hcal$ is dominated with respect to the 
empirical objectives by some empirically Pareto-optimal $h^*\in\Hcal$, 
\ie $\hatLcal_i(\hat h^*)\leq \hatLcal_i(h^*)$ for all $i\in[N]$.
Statement~\eqref{eq:pareto} follow by applying 
Lemma~\ref{lem:generalization} to both sides 
of this inequality and rearranging terms.

The main insight for proving part b) 
is that $\hat h^*\in\argmin_{h\in\Hcal}\hatLcal_{\Ucal}(h)$ 
for the Chebyshev scalarization $\Ucal(x_1,\dots,x_N)=\max_{j\in[N]} w_jx_j$
with weights $w_j=\frac{1}{\hatLcal_j(h^*)}$ for $j\in[N]$,
as long as $\hatLcal_i(h^*)>0$ for all $i\in[N]$. 
With $h^*\in\argmin_{h\in\Hcal}\Lcal_{\Ucal}(h)$ 
it follows from Theorem~\ref{thm:scalarization} 
that $\max_{j\in[N]}w_j\Lcal_j(\hat h^*) \leq \max_{j\in[N]}w_j\Lcal_j(h^*) + 2\max_{j\in[N]}w_j\Ccal_j(n_j,\Hcal,\deltaprime)$.
For any $i\in[N]$, it holds that 
$w_i\Lcal_i(\hat h^*)\leq\max_{j\in[N]}w_j\Lcal_j(\hat h^*)$,
and the assumption of ray completeness ensures that,
$w_i\Lcal_i(\hat h^*)=\max_{j\in[N]}w_j\Lcal_j(\hat h^*)$.
In combination, one obtains the same statement as 
\eqref{eq:pareto-inv}, except with a potentially weaker 
generalization term $\frac{2}{w_i}\max_{j\in[N]}w_j\Ccal_j(n_j,\Hcal,\deltaprime)$.
To obtain the desired result, one creates additively shifted 
objective functions that result in a learning setting equivalent 
to the original one, but in which all terms $w_j\Ccal_j(n_j,\Hcal,\deltaprime)$ 
for $j\in[N]$ are identical, such that, in particular, 
$\max_{j\in[N]}w_j\Ccal_j(n_j,\Hcal,\deltaprime)=w_i\Ccal_i(n_i,\Hcal,\deltaprime)$. 
For complete proofs, see Appendix~\ref{sec:proofs}.

\myparagraph{Discussion.}
The theorems in this section clarify the relation 
between the true Pareto-optimal set and its 
empirical counterpart.
When looking a single objective at a time, the 
relation is nearly trivial: Theorem~\ref{thm:pareto-single}
establishes that any empirically Pareto-optimal hypothesis 
is not much worse than any truly Pareto-optimal hypothesis 
with respect to \emph{at least one} of the objectives.

More interesting is the situation when studying all objectives simultaneously. 
Theorem~\ref{thm:pareto} provides a multi-objective
analog of the classical \emph{empirical risk minimization principle}~\citep{vapnik2013nature}.
Solving the empirical multi-objective learning problem
makes sense as a learning strategy, because for every 
truly Pareto-optimal hypothesis there is an 
empirically Pareto-optimal one that has not much 
larger (true) objective values, jointly 
across all of the objectives. 
Reversely, every empirically Pareto-optimal hypothesis 
is not substantially worse than some Pareto-optimal one, 
if we make an additional assumption on the geometry of 
the Pareto-front.

The employed \emph{ray completeness} assumption is quite 
restrictive and we do not expect it to be fulfilled in
most real-world situations.
For example, it is violated already whenever one of 
the objectives is bounded from below by a constant 
bigger than $0$. 

In the two-objectives situation, ray completeness
does hold if the Pareto-front is a continuous curve between 
some point on the $\Lcal_1$-coordinate axis and some point 
on the $\Lcal_2$-coordinate axis, excluding the origin. 
An example where such a situation can happen is a classification 
task in the realizable setting with \emph{classification error} 
and (suitably defined) \emph{computational cost} as objectives. 
For a sufficently rich hypothesis set, the smallest achievable 
error will be a continuous and mononotically decreasing function 
of the specified computational budget. 
Consequenty, the Pareto-front will be a continuous curve 
between a point $(a,0)$, where $a$ is the classification 
error of the classifier with minimal budget, and a 
point $(0,b)$, where $b$ is the smallest computational 
cost for a classifier achieving minimal classification 
error. 
Note that realizability is necessary. Otherwise, the curve 
would still be monotonic, but the second point in the above 
construction would not lie on the $\Lcal_2$-coordinate axis.
Consequently, ray completeness would not be fulfilled.

While sufficient, ray completeness is certainly not a 
necessary condition. For example, if both the real objective 
and the empirical objective are bounded away from zero by
the same constant, substracting this constant from both
objectives could yield a situation that is equivalent in
terms of multi-objective learning, but in which ray 
completeness might be fulfilled. 
Furthermore, from the Theorem's proof one can see that 
a weaker condition would suffice, namely that every ray 
through the empirical Pareto-front front also intersects 
the actual Pareto front. 
This formulation would complicate the condition, though, 
as it introduces a dependence on the dataset and would
nevertheless still not be a mathematically necessary 
condition.
Therefore, we leave the task of identiying a condition 
that is necessary as well as sufficient to future work. 

\medskip
The following theorem shows that some additional assumption
is required for Theorem~\ref{thm:pareto} to hold. 

\begin{theorem}\label{thm:impossibility}
Let $N\geq 2$. Then, for any $C>0$ there exist a learning 
problem that fulfills \bothass with $\Ccal_i(n_i,\Hcal,\delta)=0$ 
for $i\in[N-1]$, but for which with probability at 
least $\frac{1}{2}$ there exists an empirically 
Pareto-optimal $\hat h^*\in \Hcal$, such that 
for all Pareto-optimal $h^*\in\Hcal$, it holds
\begin{align}
\forall i\in [N-1]:\quad \Lcal_i(\hat h^*) > \Lcal_i(h^*) + C.
\end{align}
\end{theorem}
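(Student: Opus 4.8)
The plan is to establish the impossibility through an explicit counterexample: a learning problem built on a two-element hypothesis class on which the reverse Pareto inclusion of Theorem~\ref{thm:pareto}b) fails with constant probability. The guiding idea is that the $N-1$ trivially-generalizing objectives should separate two hypotheses $h_1,h_2$ by a margin exceeding $C$, while the single noisy objective $\Lcal_N$ should be statistically \emph{tied} between them, so that sampling fluctuations alone can promote the truly dominated hypothesis into the empirically Pareto-optimal set.

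Concretely, I would take $\Hcal=\{h_1,h_2\}$ and make objectives $1,\dots,N-1$ data-independent, hence trivially generalizing with $\Ccal_i\equiv 0$, by setting $\Lcal_i(h_1)=0$ and $\Lcal_i(h_2)=C+1$ for every $i\in[N-1]$. For the last objective I would let each sample carry two independent coordinates $z=(a,b)$ with $a,b$ uniform on $[0,1]$, and define the per-sample losses so that $\hatLcal_N(h_1)=\frac1n\sum_i a_i$ and $\hatLcal_N(h_2)=\frac1n\sum_i b_i$. Then $\Lcal_N(h_1)=\Lcal_N(h_2)=\tfrac12$, and boundedness together with Hoeffding yields a legitimate term $\Ccal_N(n,\Hcal,\delta)\to 0$, so \bothass hold.

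The verification then proceeds in three short steps. First, since $h_1$ is strictly better than $h_2$ on objectives $1,\dots,N-1$ and tied on objective $N$, $h_1$ strongly dominates $h_2$, so $h_1$ is the \emph{unique} truly Pareto-optimal hypothesis. Second, the reversal event $\{\frac1n\sum_i b_i<\frac1n\sum_i a_i\}$ has probability exactly $\tfrac12$, because the two empirical means are independent and identically distributed with a continuous law, so ties occur with probability zero; on this event $h_1$ fails to dominate $h_2$ empirically (it is worse on objective $N$), and as $\Hcal$ has no other members, $h_2$ is empirically Pareto-optimal. Third, choosing $\hat h^*=h_2$ on this event, every truly Pareto-optimal $h^*$ equals $h_1$, so for all $i\in[N-1]$ one has $\Lcal_i(\hat h^*)=C+1>0+C=\Lcal_i(h^*)+C$, which is exactly the asserted inequality.

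The one delicate point, and the real content of the argument, is the probability bound. Reaching $\geq\tfrac12$ demands that the empirical order on $\Lcal_N$ reverse at least half the time, yet keeping $h_2$ genuinely non-Pareto-optimal forbids it from being truly better on $\Lcal_N$; these two requirements meet precisely at equal expectations $\Lcal_N(h_1)=\Lcal_N(h_2)$, where the symmetry of independent identically distributed means pins the probability to exactly $\tfrac12$. This borderline is forced rather than chosen, and since the symmetry argument does not depend on $n$, the failure persists in the large-sample limit---underscoring that it reflects a genuine asymmetry of Pareto generalization rather than a small-sample artifact. One may also note that the unique Pareto point has vanishing first coordinates, so the construction indeed violates ray completeness, as any counterexample to Theorem~\ref{thm:pareto}b) must.
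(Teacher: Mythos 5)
Your proof is correct, but it takes a genuinely different route from the paper's. The paper builds its counterexample out of the classical overfitting phenomenon: $\Hcal$ is the class of piecewise-constant classifiers on $[0,1]$ with at most $K$ segments, $\Lcal_1$ counts the number of jumps (trivially generalizing), and $\Lcal_2$ is the $0/1$-loss under pure label noise ($p(y|x)=\tfrac12$), so the only truly Pareto-optimal hypotheses are the constant ones, yet with high probability the empirical $0/1$-loss strictly decreases as the number of segments grows, which places hypotheses with up to $K>C$ jumps on the empirical Pareto front. Your construction instead isolates the phenomenon in its minimal form: a two-element hypothesis class in which the $N-1$ data-independent objectives separate $h_1$ from $h_2$ by more than $C$, while the single noisy objective is an exact statistical tie, so that an exchangeability argument pins the reversal probability at exactly $\tfrac12$, independently of $n$. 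Both arguments are valid. What yours buys is brevity and a clean, $n$-free probability bound --- the i.i.d.\ symmetry of the two empirical means replaces the paper's informal ``with high probability the empirical loss is strictly decreasing'' claim, which strictly speaking would require its own justification --- and it makes transparent why the constant $\tfrac12$ is the natural threshold: any attempt to push the reversal probability higher would force $h_2$ to be truly better on $\Lcal_N$ and hence Pareto-optimal. What the paper's construction buys is the demonstration that the failure arises in a standard supervised-learning setup rather than from a hand-tailored tie, and that the empirical Pareto set can simultaneously contain many spurious elements at every sparsity level. Your closing observation that the unique Pareto point $(0,\dots,0,\tfrac12)$ violates ray completeness is also correct and is a useful sanity check against Theorem~\ref{thm:pareto}b).
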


\setlength{\columnsep}{24pt}%
\begin{wrapfigure}[10]{r}{0.4\textwidth}
\centering
\includegraphics[height=.125\textwidth]{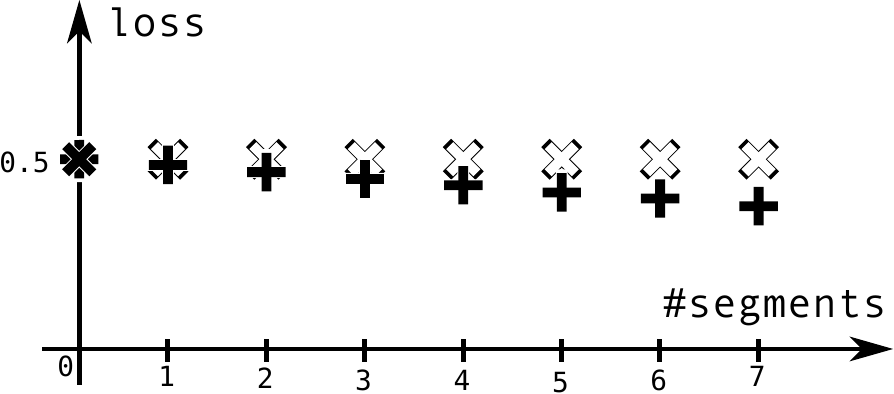}
\caption{Illustration of the counterexample proving Theorem~\ref{thm:impossibility} for $N=2$}
\label{fig:counterexample}
\end{wrapfigure}

\textbf{Proof.} We prove the theorem by constructing a 
concrete counterexample that exploits the classic 
\emph{overfitting} (or \emph{bias-variance trade-off})
phenomenon of single-objective supervised learning~\citep{vapnik2013nature}.

First, we look at the case $N=2$. Let 
$\Zcal=\Xcal\times\Ycal$ with $\Xcal=[0,1]$ and $\Ycal=\{0,1\}$,
$p(x)$ be the uniform distribution and $p(y|x)=\frac12$. 
Let $\Hcal=\{h:\Xcal\to\Ycal\}$ be the set of piecewise-constant 
functions that consist of at most $K$ segments.
We choose the number of jumps as $\Lcal_1$ and the $0/1$-loss as 
$\Lcal_2$. 
Then, \ass and \asss are fulfilled: $\Hcal$ is known to have 
VC-dimension $2k$~\citep{shalev2014understanding}, so a 
classical generalization bound holds for $\Lcal_2$. 
$\Lcal_1$ even generalizes trivially.

We observe that every hypothesis in $h\in\Hcal$ fulfills 
$\Lcal_2(h)=\frac12$. 
Consequently, the two Pareto-optimal solutions, $h$, are 
the constant classifiers, which fulfill $\Lcal_2(h)=\frac12$ 
and $\Lcal_1(h)=0$.

Empirically, however, for sufficiently many points, with high 
probability, the empirical loss $\hatLcal_2(h)$ will be 
strictly monotonically decreasing with respect to $\hatLcal_1(h)$,
as more segments allow to better fit the training data.
Consequently, the set of empirically Pareto-optimal solutions 
will contain elements with $\hatLcal_1(h)=k$ for any $k\in[K]$, 
\ie arbitrarily far from all solutions in the truly Pareto-optimal 
set.
Figure~\ref{fig:counterexample} shows a visualization of this situation.

For larger $N$, we use the analogous construction in $\R^{N-1}$.
Hypotheses have at most $K$ jumps in each coordinate dimension.
Objectives $1$ to $N-1$ are the number of jump per coordinate;
objective $N$ is the classification $0/1$-loss.

\myparagraph{Discussion.}
Theorem~\ref{thm:impossibility} establishes 
that the asymmetry between the statements a) and b) 
of Theorem~\ref{thm:pareto} is an intrinsic property 
of the multi-objective setting, not a limitation 
of our proof techniques.
There can indeed be hypotheses in the empirically 
Pareto-optimal set that are not in an excess 
relation with any hypothesis in the truly 
Pareto-optimal set.
Note that despite the fact that multi-objective 
learning includes single-objective learning as 
a special case, there is no contradiction to
the classical symmetric result. 
For $N=1$, the fact that $I\subset[N]$ is non-empty 
in Theorem~\ref{thm:pareto-single} makes its statement 
identical to Theorem~\ref{thm:pareto} b) without 
the additional assumption. 
Theorem~\ref{thm:impossibility} holds only for $N\geq 2$.

\subsection{Summary} 
In combination, the results of this section
establish a detailed picture of similarities
and differences between the generalization 
properties of single-objective and multi-objective 
learning.
In particular, it highlights a fundamental difference 
between single-objective learning and Pareto-based
multi-objective learning:
in the single-objective setting, empirical risk 
minimization is a good learning strategy, because
with growing data, the every minimizer of the 
empirical risk also has close to optimal true 
risk.
In the multi-objective setting, scalarized learning
has the same properties, but the resulting guarantees
hold only with respect to the scalarization
of the objectives, not each of them individually.
Joint statements across all objectives hold as
well, thereby justifying Pareto-based learning.
However, without additional assumptions excess 
guarantees can only be given for a subset of 
the empirical solutions. 

\section{Applications}\label{sec:applications}
Our results of the previous section 
provide new tools for analyzing learning tasks in 
which multiple, potentially competing, quantities 
are of simultaneous interest, such as \emph{fairness},
\emph{robustness}, \emph{efficiency} and \emph{interpretability}.
So far, the generalization properties of these quantities 
have been studied either not at all, or only with task-specific 
tools. 
Similarly, we expect that \emph{multi-task}, \emph{multi-label} 
and \emph{meta-learning}, as well as \emph{AutoML} will potentially
be able to benefit from the multi-objective view. 

In the rest of this section, we sketch three exemplary applications. 
In Section~\ref{subsec:lasso} we show how our results on empirical 
versus true Pareto-optimality can provide new insights for a 
well-known method. 
In Section~\ref{subsec:term} we demonstrate how our results on
scalarized learning provide a simple and flexible way for constructing 
new single-objective generalization bounds. 
In Section~\ref{subsec:cortes}, we improve an existing generalization 
bound for the multi-objective setting, which thanks to our results 
of Section~\ref{sec:main} requires only a few lines of proof. 
For a more high-level discussion of other application scenarios, 
see Section~\ref{sec:scenarios}.

\subsection{Simultaneous feature selection and regression}\label{subsec:lasso}
The classical LASSO method~\citep{Tibshirani1996Regression} learns 
a linear regression function by solving the following regularized
risk minimization problem 
\begin{align}
\min_{\beta\in\mathbb{R}^d}\quad \frac{1}{n}\sum_{i=1}^n (y_i - \beta^\top x_i)^2 + \lambda \|\beta\|_{L^1}.  \label{eq:LASSO}
\end{align}
Here $\{(x_1,y_1),\dots,(x_n,y_n)\}\subset\R^d\times\R$ 
is a given training set and $\lambda\in\R^+$ is a free parameter 
that trades off the data fidelity on the training set with 
the sparsity of the coefficient vector $\beta$ at its optimum. 
LASSO is particularly popular because it not only learns 
a regression function but also performs feature selection 
and therefore can give rise to more interpretable regression 
models than other regression techniques.
The set of all solutions obtained by minimizing \eqref{eq:LASSO} 
for different values of $\lambda$ is called the \emph{solution path}.
A number of efficient techniques for computing it have been 
developed~\citep{efron2004least,gaines2018algorithms,osborne2000lasso}.

We can interpret the LASSO problem equivalently as the 
linear scalarization of a two-objective learning problem. 
The first objective is the expected squared loss
$\Lcal_1(\beta)=\E_{(x,y)}(y-\beta^\top x)^2$, which 
has $\hatLcal_1(\beta)=\frac{1}{n}\sum_{i=1}^n(y_i-\beta^\top x_i)^2$
as empirical counterpart.
When the data and coefficient vector come from bounded domains, 
standard generalization bounds are known to hold that 
relate $\Lcal_1$ and $\hatLcal_1$, see \eg~\citet[Theorem 11.11]{mohri2018foundations}. 
The second objective is the regularizer, $\Lcal_2(\beta)=\hatLcal_2(\beta)=\|\beta\|_{L^1}$,
which generalizes trivially.

The (single-objective) generalization properties 
of LASSO's squared loss term are well understood. 
The multi-objective view, however, adds insight 
into its relation with the regularizer, which reflects 
the sparsity and thereby the interpretability of 
the solutions.
First, we observe that the underlying optimization 
problem is convex, so each empirically Pareto-optimal 
solution can be recovered by solving~\eqref{eq:LASSO} 
for some value of $\lambda$. 
Therefore, existing \emph{solution path} methods 
can readily be used to identify the empirical Pareto 
front. 

Theorem~\ref{thm:pareto}~a) now ensures that each 
truly Pareto-optimal solution can be approximately 
recovered this way. 
This means, we can be sure that no solutions exist 
that are substantially sparser at equal accuracy or 
more accurate at identical sparsity with respect to 
the true objectives than some in the solution path.

However, Theorem~\ref{thm:impossibility} reminds us 
that not all solutions found by solving~\eqref{eq:LASSO} 
will necessarily be close to truly Pareto-optimal. 
In particular, this means, while each individual element 
of the solution path will have optimal sparsity for 
the empirical accuracy it achieves, its sparsity might 
be far from optimal compared to other solutions of similar 
true accuracy. 
Consequently, if optimal sparsity is important for the 
task at hand, the solutions on the regularization path 
should be further evaluated, \eg using validation data. 

The latter comment does not apply if the true underlying 
regression task is actually linear, such that a coefficient 
vector, $\beta$, exists with vanishing objective, $\Lcal_1(\beta)=0$.
Because the same property holds for $\Lcal_2$ (trivially 
achieved by $\beta=0$), and the regulariation path is a 
connected set~\citep{tibshirani2013lasso}, the condition 
of \emph{ray completeness} would be fulfilled. 
Theorem~\ref{thm:pareto}~b) then guarantees that in 
fact all empirically Pareto-optimal solutions are also 
approximately truly Pareto-optimal.

\subsection{Tilted empirical risk minimization}\label{subsec:term}
Tilted empirical risk minimization (TERM)~\citep{li2021tilted} 
has recently been proposed as a widely applicable technique for 
making learning problem more \emph{robust} or more \emph{fair}.
In its group-based form, TERM consists of minimizing an 
exponentially weighted risk functional
\begin{align}
J_t(f) &= \frac{1}{t}\log\Big(\frac{1}{N}\sum_{i=1}^N e^{tR_i(f)}\Big)
\quad\text{with}\quad
R_i(f) = \frac{1}{n}\sum_{(x,y)\in S_i} \ell(y,f(x))
\label{eq:gTERM}
\end{align}
for a loss function $\ell$ and training data given as $N$ 
potentially overlapping groups, $S_1,\dots,S_N$. For simplicity 
of explosition we assume all groups to be of identical sizes, $n$. 
The tilt parameter $t\in\mathbb{R}\setminus\{0\}$ determines 
whether the effect of TERM is to provide \emph{robustness} 
against outlier groups ($t<0$), or to encourage \emph{fairness}
between all groups ($t>0$). For $t\to\infty$ and $t\to-\infty$, 
TERM converges to $\max_i R_i(f)$ and $\min_i R_i(f)$, respectively.

Taking a multi-objective perspective, $J_t$ in \eqref{eq:gTERM} 
can be seen as a parametrized family of scalarizations of empirical 
objectives $R_1,\dots,R_N$. Each $J_t$ is $1$-Lipschitz with 
respect to the $L^\infty$ norm. 
Assuming that generalization bounds hold for each individual 
group, then Theorem~\ref{thm:scalarization} guarantees that 
a generalization bound of the same structure holds also for 
$J_t$, simultaneously across all values of $t$. 

\emph{Hierarchical TERM (hTERM)}~\citep{li2021tilted} uses the 
same exponentially weighted functional $J_t$, but the per-group 
terms $R_1,\dots,R_K$ that it combines are not averages 
across samples as in~\eqref{eq:gTERM}, but TERM-losses 
themselves with individual tilt parameters $\tau_1,\dots,\tau_K$.
From our previous analysis, we know generalization bounds 
for each of those. Consequently, by the same construction
as above, we readily obtain a generalization bound for 
hTERM, which in fact holds uniformly across all combinations
of tilt parameters.

\subsection{Agnostic learning with multiple objectives}\label{subsec:cortes}
\citet{cortes2020agnostic} studies the generalization properties 
of multi-objective learning for the case of a special scalarization 
obtained by minimizing over convex combinations. 
To allow for an easier comparison, we state their result in 
our notation.\footnote{Our formulation also has slightly 
different constants in the generalization term, which we 
believe to be necessary based on the theorem's proof.}

\begin{theorem}[Theorem~3 in~\citet{cortes2020agnostic}]\label{thm:cortes}
Let $\Hcal$ be a hypothesis set for a supervised learning
problem with input set $\Xcal$ and output set $\Ycal$,
that fulfills $\|(h(x'),y')-(h(x),y)\|\leq D$ for 
some constant $D>0$ and for all $(x,y),(x',y')\in\Xcal\times\Ycal$.
Let $\ell_i:\Ycal\times\Ycal\to\R_+$ for $i\in[N]$ be loss 
functions that are $M_i$-Lipschitz and upper-bounded by $M$. 
Set $\Lcal_i(h)=\E_{(x,y)}[\ell(y,h(x)]$ and $\hatLcal_i(h)=\frac{1}{n}\sum_{i=1}^n\ell(y_i,h(x_i))$
for a dataset $S\stackrel{\iid}{\sim}p(x,y)$ of size $n$. 
For a set of scalarization weights, $W\subset\Delta_N$, let 
$\Lcal_W(h) = \max_{w\in W}\sum_{i=1}^N w_i\Lcal_i(h)$
and 
$\hatLcal_W(h) = \max_{w\in W}\sum_{i=1}^N w_i\hatLcal_i(h)$.
Assume that $\sum_{i=1}^N w_i M_i\leq \beta$ for all $w=(w_1,\dots,w_N)\in W$.

Then, for any $\epsilon>0$ and $\delta\in(0,1)$, with probability
at least $1-\delta$, the following inequality holds 
for all $h\in\Hcal$:
\begin{align}
\Lcal_W(h) &\leq  \hatLcal_W(h) + 2\beta\hat\Rfrak_S(\Hcal) 
           + M\epsilon + 3\beta D\sqrt{\frac{1}{2n}
           \log\Big[\frac{2|W_{\epsilon}|}{\delta}\Big]},
\label{eq:cortes}
\end{align}
where $\hat\Rfrak_S(\Hcal)$ is the \emph{empirical Rademacher
complexity} of the hypothesis class $\Hcal$ with respect to $S$, 
and $|W_{\epsilon}|$ is the size of a minimal $\epsilon$-cover of $W$.
\end{theorem}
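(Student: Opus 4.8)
The plan is to recognize $\Lcal_W(h)=\max_{w\in W}\langle w,(\Lcal_1(h),\dots,\Lcal_N(h))\rangle$ as a maximum over the family of \emph{linear} scalarizations indexed by $w\in W$: for each fixed $w$ I would prove a single-objective generalization bound by Rademacher-complexity arguments, and then pass from individual weights to the supremum by a covering (union-bound) argument over $W$. This is the classical route underlying \citet{cortes2020agnostic}, and it is precisely this discretization that produces the stated dependence on $|W_\epsilon|$ --- in contrast to the $|W_\epsilon|$-free bound our Theorem~\ref{thm:scalarization} would give for the same object, since $\max_{w\in W}\langle w,\cdot\rangle$ is itself a monotone $1$-Lipschitz (w.r.t.\ $L^1$) scalarization. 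Throughout I write $\Lcal(h)=(\Lcal_1(h),\dots,\Lcal_N(h))$ and $\hatLcal(h)$ for its empirical counterpart.

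First I would fix a weight $w\in W$ and introduce the scalarized loss $\ell_w(z,h)=\sum_{i=1}^N w_i\,\ell_i(y,h(x))$. Because $w$ lies in the simplex and each $\ell_i\leq M$, the loss $\ell_w$ is bounded by $M$; because each $\ell_i$ is $M_i$-Lipschitz and $\sum_i w_iM_i\leq\beta$, the composite $\ell_w$ is $\beta$-Lipschitz in the prediction argument. The empirical-Rademacher form of the standard bound (\eg~\citet[Thm.~3.3]{mohri2018foundations}) then gives, with probability $\geq 1-\delta'$ and uniformly over $h\in\Hcal$, that $\langle w,\Lcal(h)\rangle-\langle w,\hatLcal(h)\rangle$ is at most $2\,\hat\Rfrak_S(\ell_w\circ\Hcal)+3B\sqrt{\tfrac{1}{2n}\log(2/\delta')}$, where $B$ is the range of $\ell_w$. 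Talagrand's contraction lemma (in its vector-valued form, since $h(x)$ need not be scalar) collapses $\hat\Rfrak_S(\ell_w\circ\Hcal)\leq\beta\,\hat\Rfrak_S(\Hcal)$, producing the leading term $2\beta\hat\Rfrak_S(\Hcal)$; and the range $B$ is controlled by $\beta D$, because $\ell_w$ is $\beta$-Lipschitz and any two prediction/target pairs differ by at most $D$. This accounts for the factor $3$, the $\beta D$ in front of the square root, and the $2/\delta'$ inside the logarithm.

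Next I would handle the supremum over $W$. Let $W_\epsilon\subseteq W$ be a minimal $\epsilon$-cover w.r.t.\ the $L^1$ metric on weights. For any $w,w'\in W$ with $\|w-w'\|_1\leq\epsilon$ and any $h$, boundedness of the objectives ($\Lcal_i\leq M$) yields $|\langle w-w',\Lcal(h)\rangle|\leq M\epsilon$, and likewise for the empirical objectives; this discretization error is exactly the additive $M\epsilon$ term. Applying the fixed-$w$ bound to each of the $|W_\epsilon|$ net points at confidence level $\delta/|W_\epsilon|$ and taking a union bound replaces $\log(2/\delta')$ by $\log(2|W_\epsilon|/\delta)$. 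Finally, for an arbitrary $h$ I let $w^\star$ attain $\Lcal_W(h)$ and pick $w'\in W_\epsilon$ within $\epsilon$ of it; chaining $\Lcal_W(h)=\langle w^\star,\Lcal(h)\rangle\leq\langle w',\Lcal(h)\rangle+M\epsilon$, then the fixed-$w'$ generalization bound, then $\langle w',\hatLcal(h)\rangle\leq\hatLcal_W(h)$ (valid since $w'\in W$), assembles inequality~\eqref{eq:cortes}.

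The main obstacle I anticipate is bookkeeping rather than conceptual: correctly invoking the \emph{vector-valued} contraction lemma to extract the factor $\beta$ when the prediction and loss arguments are not scalar, and tracking the McDiarmid bounded-difference constants so that the confidence term lands precisely on $3\beta D\sqrt{\tfrac{1}{2n}\log(2|W_\epsilon|/\delta)}$ with the factor $2|W_\epsilon|$ inside the logarithm. This constant-chasing is presumably where the present restatement diverges slightly from the original, as flagged in the footnote to the statement.
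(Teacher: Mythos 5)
This theorem is not proved in the paper at all: it is Theorem~3 of \citet{cortes2020agnostic}, restated here only so that it can be contrasted with Theorem~\ref{thm:analog-of-cortes}, and the paper defers entirely to the original reference (modulo the footnoted constant adjustment). So there is no in-paper proof to match your proposal against. That said, your reconstruction follows the route that the original proof takes and that the form of \eqref{eq:cortes} essentially dictates: a per-weight Rademacher bound (with contraction supplying the factor $\beta$ in front of $\hat\Rfrak_S(\Hcal)$ and the bounded-difference constant $\beta D$ in the confidence term), an $\epsilon$-cover of $W$ with a union bound over the $|W_{\epsilon}|$ net points producing the $\log(2|W_{\epsilon}|/\delta)$ term, and an $M\epsilon$ discretization error; your observation that only one $M\epsilon$ is needed because the empirical side can be bounded by $\hatLcal_W(h)$ directly for any $w'\in W$ is correct and matches the stated bound. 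The details you flag as delicate (the vector-valued contraction step and the exact constants) are exactly where the paper's footnote suggests the original constants need correction, so your caution is well placed. The genuinely different route is the one the paper takes for its \emph{own} result, Theorem~\ref{thm:analog-of-cortes}: there, Theorem~\ref{thm:scalarization} gives a generalization bound holding uniformly over \emph{all} Lipschitz scalarizations at once, so the covering of $W$ and the attendant $M\epsilon$ and $|W_{\epsilon}|$ terms disappear, leaving only a $\log(2N/\delta)$ dependence from the union bound over the $N$ base objectives. Your proposal buys a faithful derivation of the cited bound; the paper's framework buys a strictly tighter and structurally simpler one, which is precisely the comparison Section~\ref{subsec:cortes} is making.
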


One can see that inequality~\eqref{eq:cortes} 
precisely matches the form of our excess bound 
in Theorem~\ref{thm:scalarization} for a 
specific scalarization. 
Indeed, we can derive an analogous theorem 
using our results of Section~\ref{sec:main}. 

\begin{theorem}
\label{thm:analog-of-cortes}
Make the same assumptions as in Theorem~\ref{thm:cortes}. 
Then, for any $\delta\in(0,1)$, with probability at least $1-\delta$ the following inequality holds for all $h\in\Hcal$:
\begin{align}
\Lcal_W(h) &\leq \hatLcal_W(h) + 2\beta\hat\Rfrak_S(\Hcal) + 3\beta D\sqrt{\frac{\log\frac{2N}{\delta}}{2n}}.
\label{eq:analog-of-cortes}
\end{align}
\end{theorem}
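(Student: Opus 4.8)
The plan is to instantiate the general scalarization bound of Theorem~\ref{thm:scalarization}~a) for the specific scalarization family used here, namely the worst-case convex combination $\Ucal_W(x_1,\dots,x_N)=\max_{w\in W}\sum_{i=1}^N w_i x_i$, and then to replace the abstract generalization terms $\Ccal_i(n_i,\Hcal,\deltaprime)$ with a concrete Rademacher-based bound valid under the assumptions of Theorem~\ref{thm:cortes}. First I would verify the two structural hypotheses of Theorem~\ref{thm:scalarization}. The map $\Ucal_W$ is a supremum of the linear functionals $x\mapsto \sum_i w_i x_i$; since each weight vector $w$ lies in the simplex $\Delta_N$ and satisfies $\sum_i w_i M_i\leq\beta$, and since a supremum of $1$-Lipschitz (in the appropriate weighted sense) functions is again Lipschitz, $\Ucal_W$ is Lipschitz with constant governed by $\beta$ with respect to a monotonic norm. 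Because all $w_i\geq 0$, $\Ucal_W$ is monotone in each coordinate, so a monotonic norm is the natural choice; I would use the weighted structure so that the relevant norm of the generalization-term vector collapses to $\max_{w\in W}\sum_i w_i\Ccal_i$, bounded by $\beta\max_i\Ccal_i$ up to the Lipschitz factor.

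Next I would supply the concrete single-objective generalization term. Under the stated assumptions (losses $M_i$-Lipschitz, bounded by $M$, the diameter bound $D$ on $(h(x),y)$, and an $\iid$ sample of size $n$), each individual objective $\Lcal_i$ admits a standard Rademacher-based uniform deviation bound: with probability $1-\delta'$, for all $h\in\Hcal$, $|\Lcal_i(h)-\hatLcal_i(h)|\leq 2M_i\hat\Rfrak_S(\Hcal) + 3M_i D\sqrt{\tfrac{1}{2n}\log\tfrac{2}{\delta'}}$, where the contraction lemma passes from the loss-composed class to $\Hcal$ and McDiarmid's inequality with the diameter $D$ supplies the confidence term. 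This identifies $\Ccal_i(n,\Hcal,\delta')$ explicitly. Applying Lemma~\ref{lem:generalization} with $\delta'=\delta/N$ (since all $N$ objectives are non-trivial here, so $\Nnt=N$) gives the simultaneous bound over all $i$ at confidence $1-\delta$, which accounts for the $\log\tfrac{2N}{\delta}$ appearing in~\eqref{eq:analog-of-cortes}.

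Finally I would combine these two ingredients: feeding the explicit $\Ccal_i$ into the conclusion of Theorem~\ref{thm:scalarization}~a), the generalization gap for $\Lcal_W$ is bounded by $L_{\Ucal}$ times the monotonic norm of the vector $(\Ccal_1,\dots,\Ccal_N)$, and the weighted-max structure combined with $\sum_i w_i M_i\leq\beta$ turns this into $2\beta\hat\Rfrak_S(\Hcal)+3\beta D\sqrt{\tfrac{1}{2n}\log\tfrac{2N}{\delta}}$, matching~\eqref{eq:analog-of-cortes} exactly. The main obstacle, and the step demanding the most care, is the bookkeeping that identifies the correct Lipschitz constant and monotonic norm for $\Ucal_W$ so that the weighted sum telescopes against the constraint $\sum_i w_i M_i\leq\beta$; this is precisely where the improvement over Theorem~\ref{thm:cortes} arises, since a clean uniform treatment of the weighted maximum replaces the $\epsilon$-covering of $W$ and removes both the $M\epsilon$ term and the dependence of the confidence term on $|W_\epsilon|$, leaving only the $N$-dependence from the union bound.
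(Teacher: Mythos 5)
Your proposal is correct and follows essentially the same route as the paper: instantiate the per-objective Rademacher bounds as the $\Ccal_i$, union-bound over the $N$ objectives to get the $\log\frac{2N}{\delta}$ factor, and feed this into Theorem~\ref{thm:scalarization} using the constraint $\sum_i w_iM_i\leq\beta$. The only (immaterial) difference is that you treat $\max_{w\in W}\sum_i w_ix_i$ as a single Lipschitz scalarization with respect to the monotonic seminorm $\max_{w\in W}\sum_i w_i|x_i|$, whereas the paper applies the theorem uniformly to the family of linear scalarizations $\{\Ucal_w\}_{w\in W}$ and takes the maximum over $w$ afterwards; also note that the clean bound comes directly from $\sum_i w_i\Ccal_i=\bigl(\sum_i w_iM_i\bigr)\bigl(2\hat\Rfrak_S(\Hcal)+3D\sqrt{\log\tfrac{2N}{\delta}/(2n)}\bigr)\leq\beta(\cdots)$ rather than via $\beta\max_i\Ccal_i$.
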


\textbf{Proof.} The Lipschitz assumptions imply standard 
Rademacher-based generalization bounds for each 
objective individually~\cite[Theorem~11.3]{mohri2018foundations}: 
with probability at least $1-\delta$ it holds for all $h\in\Hcal$:
\begin{align}
\Lcal_i(h) &\leq  \hatLcal_i(h) + 2M_i\hat\Rfrak_S(\Hcal) 
           + 3M_iD\sqrt{\frac{\log\frac{2}{\delta}}{2n}}.
\end{align}
We now apply Theorem~\ref{thm:scalarization} for the 
family of linear scalarizations, $\Ucal_w(x_1,\dots,x_N)=\sum_{i=1}^N w_ix_i$
with $w\in W$, and we insert that $w_i M_i\leq \beta$.
Finally, taking the maximum over $w$ on both sides of 
the inequality yields \eqref{eq:analog-of-cortes}.

Because of the power of the introduced 
multiobjective framework, the proof of Theorem~\ref{thm:analog-of-cortes}
is much shorter than the original one of Theorem~\ref{thm:cortes}. Nevertheless, 
our result has a number of advantages. 
First, our bound is structually simpler. It holds without 
need for an $\epsilon$-parameter that additively enters the 
right hand side of~\eqref{eq:cortes}, yet also influences the size 
of the right-most confidence term.
Second, the right hand side of our bound is independent of 
the size of $W$, with the confidence term only depending 
on the number of objectives.
As a consequence, our bound is substantially tighter, except 
for trivially small sets $W$. 
For example, for the common case of convex combinations, 
$W=\Delta_N$, the covering size $|W|_{\epsilon}$ is
of order $(1/\epsilon)^{N-1}$. 
This makes the generalization term in \eqref{eq:cortes} of 
order $\sqrt{N/n}$, 
indicating that to preserve confidence the amount of 
data has to grow linearly with the number 
of objectives considered. 
In contrast, the right hand side of our bound 
\eqref{eq:analog-of-cortes} is independent of 
the size of $W$ and its confidence term grows 
only logarithmically with respect to $N$.
Finally, our proof is not only 
simpler than the original one but also more flexible:
it readily extends to other generalization bounds 
rather than just Rademacher-based ones, and to other 
scalarization besides linear combinations.

\section{Further application scenarios}\label{sec:scenarios} 
In Section~\ref{sec:applications} we highlighted 
some specific examples in which our proposed 
multi-objective generalization theory provides 
new insights into existing methods. 
In this section, we provide more high-level background 
and discuss additional quantities that we believe will 
or will not benefit from a multi-objective analysis.

\myparagraph{Fairness.}
\emph{Algorithmic (group) fairness} asks to create classifiers 
that are not only accurate but also do not discriminate 
against certain protected groups in their decisions. 
Formally, this property can be expressed by different 
\emph{(un)fairness measures}, such as \emph{demographic parity}, 
\emph{equality of opportunity} or \emph{equalized odds}~\citep{barocas-hardt-narayanan}.
Because accuracy and fairness can be in conflict 
with each other, fairness-aware learning is a 
prototypical candidate for multi-objective learning~\citep{martinez2020minimax,wei2020fairness,kamani2021pareto,kirtan2021addressing}.
This view also extends naturally to integration of 
multiple fairness measures~\citep{liu2020accuracy}, 
which might be incompatible with each 
other~\citep{kleinberg2016inherent,chouldechova2017fair,berk2021fairness}. 
Generalization bounds for the empirical estimation  
of unfairness measures have been developed~\citep{woodworth2017learning,konstantinov2021fairness}.
Consequently, our results from Section~\ref{sec:main}
apply, yielding a unified understanding of the 
generalization properties of fairness-aware learning, 
\eg, regularization-based~\citep{kamishima2011fairness}
constraint-based~\citep{calders2009parity}, or 
Pareto-based~\citep{liu2020accuracy,navon2021learning}.
The multi-objective view also allows us to conjecture 
that methods that seek fair hypotheses by other 
means, such as pre-processing~\citep{kamiran2012data} 
or post-processing~\citep{hardt2016equality}, might not 
reach (empirically) Pareto-optimal solutions. 
If generalization guarantees do actually hold for 
these, other ways for proving them would be required.

\myparagraph{Robustness.}
It has been observed that deep network classifiers 
in continuous domains such as image classification
are susceptible to \emph{adversarial examples}, \ie
they are not robust against small perturbation 
of the input data. 
Two main research directions have emerged to overcome 
this limitation: 
\emph{Adversarial training}~\citep{madry2018towards} adds a robustness-enforcing 
loss term to the training problem. Generalization 
bounds for such terms have been derived, \eg~\citet{yin2019rademacher}. 
Consequently, multi-objective learning can be used 
in this setting with the guarantees and caveats 
discussed above. 
\emph{Lipschitz-networks}~\citep{cisse2017parseval} 
restrict the hypothesis class to functions with a 
small Lipschitz constant, typically $1$. Afterwards
one solves a training problem that tries to enforce
a large margin between the predicted class label 
and the runner-up. 
From the achieved margins one can infer how large an 
input perturbation the classifier can tolerate without 
changing its decision~\citep{weng2018towards}. 
We are not aware of existing theoretical studies 
of such \emph{certified robustness} techniques. 
However, margin-based loss functions have a long tradition 
in machine learning, and a number of generalization bounds 
exist which are applicable in the described 
situation, such as~\citet{kuznetsov2015rademacher,koltchinskii2002empirical}.

\myparagraph{Efficiency.}
Large machine learning models, in particular deep 
networks, often have high computational demands, 
not only at training but also at prediction 
time~\citep{strubell2020energy,menghani2021efficient}. 
Consequently, a number of techniques have been 
developed that aim at reducing the computational 
cost. 
\emph{Parameter sparsification}~\citep{hoefler2021sparsity} 
and \emph{quantization}~\citep{gholami2021survey} are
widely used methods for reducing the number of 
operations required to evaluate a model. 
As data-independent properties they can readily be 
used as \emph{trivially-generalizing} objectives in 
a multi-objective learning framework~\citep{zhu2019multi}. 
Alternatively, speedup can also be achieved by encouraging
as many zero values as possible to occur as part of the 
internal computation steps of a deep network.
Such \emph{activation sparsity}~\citep{kurtz2020inducing} 
is a data-dependent quantity that can also be shown to 
generalize using standard techniques. 
Therefore it as well can be handled in a multi-objective 
way.
\emph{Adaptive computation} methods, such 
as \emph{ensembles}~\citep{schwing2011adaptive,lampert2012dynamic}, 
\emph{classifier cascades}~\citep{viola2001rapid} or 
\emph{multi-exit architectures}~\citep{huang2018multiscale,teerapittayanon2016branchynet}, 
evaluate different subsets of a larger model depending 
on the input sample. 
For suitable design choices, generalization 
bounds for the resulting computation time can
be proven, and our results will apply. 

\myparagraph{Multi-task and multi-label learning.}
\emph{Multi-task learning} has recently been put 
forward as a multi-objective task, where each 
task's loss is treated as a separate objective~\citep{sener2018multi,lin2019pareto,ma2020efficient,mahapatra2020multi}. 
This setting is of a non-standard form, as each 
task typically has a dedicated training set, and
objectives are not necessarily competing with 
each others~\citep{ruchte2021multi}. 
Nevertheless, our framework can handle this setting 
as well, making use of the property that we allow 
the empirical estimates of different objectives 
to be derived from different subsets of the 
available data. 
Pareto-based guarantees are particularly relevant 
then, because at prediction time, for each sample 
one is interested in only one of the objectives, 
namely the one of the task to which this sample belongs. 
In the related problems of 
\emph{multi-label learning}~\citep{zhang2013review}
and \emph{extreme classification}~\citep{varma2019extreme}, 
the goal is to predict multiple outputs (labels) 
for each sample.
Each label has an associated classifier objective, and 
the losses are estimated either from the total dataset 
or from (typically overlapping) subsets~\citep{shi2012multi}.
Again, our framework is flexible enough to handle this setting.
At prediction time all labels are meant to be predicted, 
and the quality is typically judged by a task-dependent 
aggregate measure, making scalarization approaches of 
particular interest in this setting.

\myparagraph{Limitations.}
Despite its generality, some multi-objective 
learning settings do not lend themselves to 
an analysis using our results.
For example, in the \emph{learning-to-rank} 
setting~\citep{liu2009learning} solutions 
are typically judged by two measures: 
\emph{precision} and \emph{recall}. \emph{A priori}, 
this makes it a promising setting for multi-objective 
analysis~\citep{cao2020ranking,svore2011learning}.
Unfortunately, we are not aware of generalization 
bounds for the \emph{precision} objective. Given 
that its value fluctuates heavily in the low-recall 
regime, it is in fact possible that \asss might not 
be fulfillable.
Also in the context of ranking, two other common objectives are 
\emph{true positive rate (TPR)} and \emph{false positive rate (FPR)}, 
which together trace out the \emph{receiver operating characteristic (ROC) curve}.
TPRs and FPRs can summarized into a single value by the 
\emph{area under the ROC curve (AUC)}~\citep{hanley1982meaning}, 
for which indeed generalization bounds have been 
derived~\citep{agarwal2005generalization}.
However, the AUC is not a scalarization in the sense of 
Section~\ref{sec:optimization}, so Theorem~\ref{thm:scalarization} 
does not apply to it.
Finally, besides the uniform generalization bounds of \ass, 
other guarantees of generalization have been developed, 
\eg, based on PAC-Bayesian theory~\citep{dziugaite2017computing,mcallester1999some}, 
or \emph{algorithmic stability}~\citep{bousquet2002stability,hardt2016train}. 
We see no principled reasons why results similar to ours
should not hold for such settings as well, but other 
techniques would be required that lie outside of 
the scope of this work.

\section{Conclusion}\label{sec:conclusion}
In this work, we proved a number of foundational 
results for the generalization theory of 
multi-objective learning.
In particular, we showed that generalization 
bounds for the individual objectives imply 
generalization and excess bounds for 
multi-objective learning using scalarizations. 
Our second main result is an analysis
of the relation between the Pareto-optimal
sets of the empirical and the 
true learning problem. This justifies 
the use of Pareto-based methods on empirical
data to approximately find all truly 
Pareto-optimal solutions. However, there
is a caveat that some of the solutions 
found might be close to Pareto-optimal 
ones only with respect to some of the 
objectives, not all of them.

We formulated our results on a high
level of generality that applies not only 
to measures of per-sample prediction
quality, for which generalization bounds
were originally developed, but also 
many other quantities of interest
for modern machine learning systems,
such as \emph{fairness}, \emph{robustness},
and \emph{efficiency}.
While initial results for some of these 
specific domains exist, we expect that 
more and stronger guarantees will be 
possible by more refined objective-specific 
analyses.

On a technical level, we see two directions
for potentially improving our results.
First, it would be desirable to have an 
explicit rather than implicit relationship 
between Pareto-optimal hypotheses and their 
best empirically Pareto-optimal approximations. 
Theorem~\ref{thm:pareto} does not provide 
this. Even though its proof contains an 
explicit procedure, it relies on uncomputable 
quantities, such as the true objective 
objective values. 
Second, given that Theorem~\ref{thm:impossibility}
establishes that there can be empirically 
Pareto-optimal hypotheses that do not 
approximate any truly Pareto-optimal 
hypothesis with respect to all objectives,
it would be desirable to have an algorithmic 
procedure for testing which hypotheses 
these are.
We see these as interesting directions for
future work.

\bibliographystyle{authordate1}
\bibliography{ms}

\clearpage
\appendix
\section{Appendix -- proofs of the main results}\label{sec:proofs}

\myparagraph{Proof of Theorem~\ref{thm:scalarization}.}
With probability at least $1-\delta$ for the dataset $S$ 
the relations of Lemma~\ref{lem:generalization} will hold. 
By studying only these cases, we again obtaining 
results that hold with probability at least $1-\delta$. 

For statement a), for any $\Ucal\in\Ufrak$ we obtain by the 
Lipschitz property of the scalarization and Lemma~\ref{lem:generalization} 
that for all $h\in\Hcal$:
\begin{align}
|\Lcal_{\Ucal}(h) - \hatLcal_{\Ucal}(h)|
&\leq  L_{\Ucal}\|\bigl(\Lcal_1(h)-\hatLcal_1(h),\dots,\Lcal_N(h)-\hatLcal_N(h)\bigr)\|_{\Ucal}
\\
&\leq  L_{\Ucal}\|\bigl(|\Lcal_1(h)-\hatLcal_1(h)|,\dots,|\Lcal_N(h)-\hatLcal_N(h)|\bigr)\|_{\Ucal}
\\
&\leq L_{\Ucal}\|\bigl(\Ccal_1(n_1,\Hcal,\deltaprime),\dots,\Ccal_N(n_N,\Hcal,\deltaprime)\bigr)\|_{\Ucal}
\label{eq:lipschitz}
\end{align}
where the last two inequalities hold because of 
the norms' monotonicity, \ie the fact that it 
is non-decreasing under increases of the 
input vector components~\citep{bauer1961absolute}.
In combination, this proves statement a).

Statement b) follows by arguments mirroring the proof of classic 
\emph{excess risk bounds}~\citep{mohri2018foundations}. 
Let $\hat h_{\Ucal}^*\in\argmin_{h\in\Hcal}\hatLcal_{\Ucal}(h)$.
Then, it holds for arbitrary $h\in\Hcal$ that 
\begin{align}
\Lcal_{\Ucal}(\hat h_{\Ucal}^*) -\Lcal_{\Ucal}(h)
&\leq 
\Lcal_{\Ucal}(\hat h_{\Ucal}^*) 
-
\hatLcal_{\Ucal}(\hat h_{\Ucal}^*) 
+
\hatLcal_{\Ucal}(h) 
-\Lcal_{\Ucal}(h)
\\
&\leq
2\|\bigl(\Ccal_1(n_1,\Hcal,\deltaprime),\dots,\Ccal_N(n_N,\Hcal,\deltaprime)\bigr)\|_{\Ucal}
\end{align}
where the first inequality holds because 
$\hatLcal_{\Ucal}(\hat h_{\Ucal}^*) \leq \hatLcal_{\Ucal}(h)$ by construction
of $\hat h_{\Ucal}^*$, and the second inequality one follows 
from applying~\eqref{eq:scalarization-generalization}
twice, once for $h$ and once for $\hat h_{\Ucal}^*$.
The statement of the theorem now follows by moving 
the term containing $h$ to the right hand side. 

\myparagraph{Proof of Theorem~\ref{thm:pareto-single}.}
We again only study the case in the inequalities of 
Lemma~\ref{lem:generalization} are fulfilled, so the 
results we achieve hold with probability at least 
$1-\delta$. 

We prove the remaining part of the theorem by contradiction.
The negation of the statement reads: 
\emph{there exists an empirically Pareto-optimal hypothesis 
$\hat h^*\in\Hcal$ and a hypothesis $h\in\Hcal$ 
such that $\Lcal_i(\hat h^*) - \Lcal_i(h) > 2\Ccal_i(n_i,\Hcal,\deltaprime)$
for all $i\in[N]$.}

For these $\hat h^*$ and $h$ it follows that for all $i\in[N]$:
\begin{align}
\hatLcal_i(h)-\hatLcal_i(\hat h^*) 
&\leq \Lcal_i(h)-\Lcal_i(\hat h^*) + 2\Ccal_i(n_i,\Hcal,\deltaprime) < 0.
\label{eq:contradiction-dominance}
\end{align}
For the first inequality we applied Lemma~\ref{lem:generalization}
twice, and the second inequality follows from the assumption. 
However, \eqref{eq:contradiction-dominance} establishes 
that $h$ empirically strongly dominates $\hat h^*$ which is a 
contradiction to the assumption that $\hat h^*$ was empirically
Pareto-optimal.

\myparagraph{Proof of Theorem~\ref{thm:pareto}}
We again only study these case in which the dataset fulfills 
the inequalities of Lemma~\ref{lem:generalization}, so 
the results we achieve holds with probability at least 
$1-\delta$.

Statement a) is a consequence of Lemma~\ref{lem:generalization} 
and the definition of (empirical) Pareto-optimality.
Let $h^*\in\Hcal$ be Pareto-optimal. If it is also
empirically Pareto-optimal, inequality~\eqref{eq:pareto}
holds trivially with $\hat h^*=h^*$. 
Otherwise, there exists an empirically Pareto-optimal 
$\hat h^*$ that dominates $h^*$ with respect to the 
empirical objectives, \ie in particular $\hatLcal_i(\hat h^*)\leq \hatLcal_i(h^*)$ for all $i\in[N]$.
From this, we obtain for all $i\in[N]$, analogously to the proof of Theorem~\ref{thm:scalarization}b):
\begin{align}
\Lcal_i(\hat h^*) - \Lcal_i(h^*) 
&\leq 
\Lcal_i(\hat h^*) - \hatLcal_i(\hat h^*) + \hatLcal_i(h^*) - \Lcal_i(h^*) 
\leq 
2\Ccal_i(n_i,\Hcal,\deltaprime).
\end{align}

Before proving statement b) we introduce 
\emph{additively shifted objectives}. 
as the main tool.

\begin{definition}
For an objective $\Lcal(h)$ with empirical estimate 
$\hatLcal(h)$ and a constant $K$, we call $\Lcal^{+K}(h)=\Lcal(h)+K$ 
and $\hatLcal^{+K}(h)=\hatLcal(h)+K$ their $K$-additively 
shifted variants.
\end{definition}

Generalization and Pareto-optimality are unaffected by additive shifts.

\begin{lemma}\label{lem:shift}
a) For any constant $K$, if a generalization bound of the 
form~\eqref{eq:single-generalization} holds for an objective $\Lcal$ 
and its empirical estimate $\hatLcal$, then a bound with 
identical generalization term also holds for $\Lcal^{+K}$ 
and $\hatLcal^{+K}$.
%\\
b) For any constants $K_1,\dots,K_N$, a solution $h\in\Hcal$ is 
Pareto-optimal for $\Lcal_1,\dots,\Lcal_N$ if and only if it 
is Pareto-optimal for $\Lcal^{+K_1}_1,\dots,\Lcal^{+K_N}_1$.
The analogous relation holds for empirically Pareto-optimality.
\end{lemma}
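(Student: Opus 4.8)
The plan is to verify both statements by direct computation, exploiting that a per-objective additive constant cancels inside differences and preserves order relations. For part a), the key observation is that the generalization gap is invariant under the shift. First I would note that for every $h\in\Hcal$,
\[
\Lcal^{+K}(h) - \hatLcal^{+K}(h) = \bigl(\Lcal(h)+K\bigr) - \bigl(\hatLcal(h)+K\bigr) = \Lcal(h) - \hatLcal(h),
\]
so that $|\Lcal^{+K}(h)-\hatLcal^{+K}(h)| = |\Lcal(h)-\hatLcal(h)|$ holds pointwise in $h$. Consequently, on exactly the same high-probability event on which $|\Lcal(h)-\hatLcal(h)|\leq \Ccal(n,\Hcal,\delta)$ holds simultaneously for all $h\in\Hcal$, the identical inequality holds for the shifted objectives with the very same generalization term $\Ccal(n,\Hcal,\delta)$. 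Since no new randomness is introduced, the confidence level $1-\delta$ is unchanged.

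For part b), the key observation is that the weak and strong dominance relations are themselves invariant under per-objective additive shifts. For any two hypotheses $h,h'\in\Hcal$ and any fixed $j\in[N]$, adding the same constant $K_j$ to both sides yields the equivalences $\Lcal_j(h)\leq\Lcal_j(h') \iff \Lcal^{+K_j}_j(h)\leq \Lcal^{+K_j}_j(h')$ and $\Lcal_j(h)<\Lcal_j(h') \iff \Lcal^{+K_j}_j(h)< \Lcal^{+K_j}_j(h')$. Because strong domination is defined purely through these $\leq$-comparisons across all objectives together with one strict $<$-comparison, I would conclude that $h$ strongly dominates $h'$ with respect to $\Lcal_1,\dots,\Lcal_N$ if and only if it does so with respect to $\Lcal^{+K_1}_1,\dots,\Lcal^{+K_N}_N$. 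Hence the collection of strongly-dominating pairs is literally identical for the two problems, and therefore so is the set of hypotheses that are not strongly dominated, \ie the Pareto-optimal set. Replacing $\Lcal_j$ by $\hatLcal_j$ throughout and repeating the identical argument establishes the claim for empirical Pareto-optimality.

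There is no genuine obstacle here: both parts reduce to the elementary fact that $x\mapsto x+K$ is an order-isomorphism of $\R$ that cancels out in differences, and in particular the argument never uses non-negativity of the objectives, so the constants $K,K_1,\dots,K_N$ may have either sign. The only point deserving a little care is the bookkeeping: within a single objective the same constant must be added to every hypothesis so that all comparisons in that coordinate are preserved, while the constants are allowed to differ across objectives---which is precisely the per-objective shift provided by the definition and exactly what the proof of Theorem~\ref{thm:pareto}b) will later exploit to equalize the generalization terms.
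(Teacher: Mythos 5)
Your proof is correct and follows essentially the same route as the paper's own (very terse) argument: the additive constants cancel in the generalization gap for part a), and per-objective shifts preserve all coordinate-wise order comparisons, hence the dominance relations and the Pareto-optimal sets, for part b). You simply spell out the details that the paper leaves implicit.
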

The proofs are elementary: for a) the additive terms 
cancel out in the generalization bound. 
For b) Pareto-optimality depends only on the the relative 
order of objective values, which is not affected by 
additive shifts. 

\begin{lemma}\label{lem:chebyshev}
Let $h^*\in\Hcal$ be a Pareto-optimal 
solution with $\Lcal_i(h)>0$ for 
all $i\in[N]$. 
Then $h^*$ is a minimizer to the Chebyshev scalarization
$\Ucal^{(\infty)}_w(h) = \max_{i\in[N]}w_i\Lcal_i(h)$
with weights $w_i=\frac{1}{\Lcal_i(h^*)}$
for $i\in[N]$.
Furthermore, for any other minimizer, $h^{\dag}$, 
of the scalarization it holds that $\Lcal_i(h^{\dag})=\Lcal_i(h^*)$
for all $i\in[N]$.
The analogous result holds for empirically Pareto-optimal hypotheses.
\end{lemma}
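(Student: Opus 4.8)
The plan is to exploit the defining property of the chosen weights, namely that $w_i \Lcal_i(h^*) = 1$ for every $i \in [N]$, which makes the scalarization value at $h^*$ equal to exactly $\Ucal^{(\infty)}_w(h^*) = \max_{i\in[N]} w_i\Lcal_i(h^*) = 1$. The whole argument then reduces to translating statements about the Chebyshev sublevel set $\{h : \Ucal^{(\infty)}_w(h) \le 1\}$ into statements about domination with respect to the objectives $\Lcal_1,\dots,\Lcal_N$, and invoking the Pareto-optimality of $h^*$.

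First I would establish that $h^*$ is a minimizer. Suppose toward a contradiction that some $h\in\Hcal$ satisfies $\Ucal^{(\infty)}_w(h) < 1$. By definition of the maximum this forces $w_i\Lcal_i(h) < 1$ for every $i$, and dividing by $w_i > 0$ (which is well defined precisely because $\Lcal_i(h^*) > 0$) yields $\Lcal_i(h) < \Lcal_i(h^*)$ for all $i$. Hence $h$ strongly dominates $h^*$, contradicting the Pareto-optimality of $h^*$. Therefore $\Ucal^{(\infty)}_w(h)\ge 1$ for all $h\in\Hcal$, and since equality is attained at $h^*$, the latter is indeed a minimizer, with minimal value $1$.

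For the claim about other minimizers, let $h^{\dag}$ be any minimizer, so $\Ucal^{(\infty)}_w(h^{\dag}) = 1$ as well. From the maximum being $1$ I obtain $w_i\Lcal_i(h^{\dag})\le 1$, i.e.\ $\Lcal_i(h^{\dag})\le\Lcal_i(h^*)$, for every $i$. If any of these inequalities were strict, then $h^{\dag}$ would weakly dominate $h^*$ with at least one strict coordinate, i.e.\ strongly dominate it — again contradicting Pareto-optimality. Thus $w_i\Lcal_i(h^{\dag}) = 1$ for all $i$, which is exactly $\Lcal_i(h^{\dag}) = \Lcal_i(h^*)$.

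Finally, the empirical statement follows by repeating both arguments verbatim, replacing $\Lcal_i$ by $\hatLcal_i$, $h^*$ by an empirically Pareto-optimal $\hat h^*$, and the weights by $w_i = 1/\hatLcal_i(\hat h^*)$ (well defined under the corresponding positivity assumption), since only Pareto-optimality and positivity of the objective values were used. I do not anticipate a genuine obstacle here; the only points requiring care are the strict positivity of the weights (needed to divide safely and to keep the domination equivalences valid) and the distinction between \emph{strict in every coordinate} — which the condition $\Ucal^{(\infty)}_w < 1$ produces — and \emph{weak domination with at least one strict coordinate} — which the value-$1$ condition produces — both of which nonetheless imply strong domination and hence the desired contradiction.
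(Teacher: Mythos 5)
Your proof is correct and follows essentially the same route as the paper's: both argue by contradiction from the normalization $w_i\Lcal_i(h^*)=1$, showing that a strictly smaller scalarization value would force strict improvement in every coordinate (contradicting Pareto-optimality), and that any other minimizer satisfies $w_i\Lcal_i(h^{\dag})\leq 1$ component-wise with no strict inequality possible. Your explicit care about the positivity of the weights and the distinction between the two flavors of domination is a welcome clarification but does not change the argument.
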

\begin{proof}
We prove the lemma by contradiction. First, 
assume $h$ to be a hypothesis with strictly 
smaller value for the scalarization. 
By construction $w_i\Lcal_i(h^*)=1$ for all 
$i\in[N]$, therefore $w_i\Lcal_i(h)<1$ for 
all $i\in[N]$ must hold. 
This, however, would imply $\Lcal_i(h)<\Lcal_i(h^*)$
for all $i\in[N]$, 
which is impossible because $h^*$ is Pareto-optimal.
For $h^{\dag}$, we know $w_i\Lcal_i(h^{\dag})\leq 1$ 
and therefore $\Lcal_i(h^{\dag})\leq\Lcal_i(h^*)$
for all $i\in[N]$. 
Because of $h^*$'s Pareto-optimality, none of these
inequalities can be strict, which proves the statement.
The same line of arguments holds in the empirical situation.
\end{proof}

We now turn to the proof of Theorem~\ref{thm:pareto} b). 
Let $\hat h^*$ be an empirically Pareto-optimal solution for $\Lcal_1,\dots,\Lcal_N$.
For a more concise notation, we abbreviate $c_i=\Ccal_i(n,\Hcal,\deltaprime)$.

First, we consider the case where none of the objectives are trivially 
generalizing, \ie $c_i>0$ for all $i\in[N]$. 
By Lemma~\ref{lem:shift}, we know that $h^*$ is also empirically Pareto-optimal 
for the shifted objectives $\hatLcal^{+K_1}_1,\dots,\hatLcal^{+K_{N'}}_{N'}$ 
with 
\begin{align}
K_i:=Cc_i-\hatLcal_i(h^*) \quad\text{ for }
 C=2+\max_j[\frac{1}{c_j}(\hatLcal_j(h^*)-\min_h\hatLcal_j(h))]
\end{align}
An explicit calculation confirms that $\hatLcal^{+K_i}_i(h)\geq 2c_i>0$, 
which by assumption implies $\Lcal^{+K_i}_i(h)\geq c_i>0$, for all $i\in[N]$.
By Lemma~\ref{lem:chebyshev} we know that $h^*$ is a minimizer 
of the Chebyshev scalarization with weights $w_i=\frac{1}{\Lcal^{+K_i}_i(h^*)}=\frac{1}{Cc_i}$ 
for all $i\in[N']$.
Let $h^*$ be a minimizer of the scalarization 
of the true objectives with same weights $w_i$.
The assumption of \emph{ray completeness} together with Lemma~\ref{lem:chebyshev} implies
$w_1\Lcal^{+K_1}_1(h^*)=\dots=w_N\Lcal^{+K_N}_N(h^*) = \max_{j\in[N]} w_j\Lcal^{+K_j}_N(h^*)$.
The Chebyshev scalarization is a weighted $L^{(\infty)}$-norm and 
$1$-Lipschitz with respect to itself. Therefore, by Theorem~\ref{thm:scalarization}:
\begin{align}
\max_{j\in[N]}w_j\Lcal^{+K_j}_j(\hat h^*)
&\leq 
\max_{j\in[N]}w_j\Lcal^{+K_j}_j(h^*)
+ 2\max_{j\in[N]}w_jc_j
\end{align}
Consequently, we obtain the component-wise inequalities:
\begin{align}
\forall i\in[N]\qquad w_i\Lcal^{+K_i}_i(\hat h^*) &\leq w_i\Lcal^{+K_i}_i(h^*) + 2\max_{j\in[N]}w_jc_j
\end{align}
Now, inserting the definition $K_i$, subtracting $w_iK_i$ from both sides 
and dividing by $w_i$ we obtain
\begin{align}
\forall i\in[N]\qquad \Lcal_i(\hat h^*) &\leq \Lcal_i(h^*) + \frac{2}{w_i}\max_{j\in[N]}w_jc_j.
\end{align}
By construction, $w_jc_j=\frac{1}{C}$ for all $j\in[N]$.
Therefore, $\frac{2}{w_i}\max_{j\in[N]}w_jc_j=\frac{2Cc_i}{C}=2c_i$. % \limits
Because $c_i=\Ccal_i(n_i,\Hcal,\deltaprime)$ this concludes the proof.

For the general situation, assume that there are $\Nnt$ non-trivially 
and $N-\Nnt$ trivially generalizing objectives. 
If $M=0$, then $\Lcal_i(h)=\hatLcal_i(h)$ for all $i=1,\dots,N$ 
and for all $h\in\Hcal$. Then, Pareto-optimal and empirically 
Pareto-optimal sets coincide, and $\hat h^*=h^*$ fulfills the 
statement of the theorem. 

Otherwise, assume without loss of generality that the objectives 
are ordered such that, $\Ccal_i(n_i,\Hcal,\deltaprime)>0$ for $i\in[\Nnt]$ and $\Ccal_i(n_i,\Hcal,\deltaprime')=0$ for $i\in\{\Nnt+1,\dots,N\}$.
Let $\Gcal = \big\{h \in\Hcal : \hatLcal_i(h)=\hatLcal_i(\hat h^*)\text{ for $i\in\{\Nnt+1,\dots,N\}$}\big\}$.
Note that also $\Gcal = \big\{h \in\Hcal : \Lcal_i(h)=\Lcal_i(h^*)\text{ for $i\in\{\Nnt+1,\dots,N\}$}\big\}$, 
because $\Lcal_{\Nnt+1},\dots,\Lcal_N$ are trivially generalizing.
$\Gcal$ is a subset of $\Hcal$ that is non-empty (because $\hat h^*\in\Gcal$). 
Consequently, the inequalities of Lemma~\ref{lem:generalization} 
and Theorem~\ref{thm:scalarization} hold also as statements for 
all $g\in\Gcal$ rather than $h\in\Hcal$.
Because $\hat h^*$ is empirically Pareto-optimal within $\Hcal$ with 
respect to $\hatLcal_1,\dots,\hatLcal_N$, it is also empirically 
Pareto-optimal in $\Gcal$ with respect to $\hatLcal_1,\dots,\hatLcal_{M}$.  
Applying the result from the case without trivially-generalizing 
objectives to this situation, we obtain that there exists $h^*\in\Gcal$ 
such that for all $i\in[\Nnt]$
\begin{align}
\Lcal_i(\hat h^*) &\leq \Lcal_i(h^*) + \Ccal_i(n_i,\Hcal,\deltaprime)
\label{eq:subset}
\end{align}

For $i\in\{\Nnt+1,\dots,N\}$, we have $\Lcal_i(\hat h^*)=\Lcal_i(h^*)$,
because $h^*\in\Gcal$. Consequently, inequality \eqref{eq:subset}
holds also for these (with $\Ccal_i(n_i,\Hcal,\deltaprime)=0$), which concludes the proof.

\end{document}